\documentclass[twoside,11pt]{article}
\usepackage{subcaption}
\usepackage[preprint]{jmlr2e}

\usepackage{amsmath}
\usepackage{tabularx,booktabs}
\usepackage{rotating}

\newcommand{\mbb}[1]{\mathbb{#1}}
\newcommand{\mcal}[1]{\mathcal{#1}}

\newcommand{\R}{\mathbb{R}}
\newcommand{\Z}{\mathbb{Z}}
\newcommand{\N}{\mathbb{N}}
\newcommand{\rr}{\rightarrow}
\newcommand{\lrr}{\longrightarrow}

\newcommand{\on}[1]{\operatorname{#1}}

\usepackage{lastpage}
\jmlrheading{24}{2023}{1-\pageref{LastPage}}{3/23}{}{21-0000}{Taejin Paik}

\ShortHeadings{Invariant Representations of Embedded Simplicial Complexes}{Paik}
\firstpageno{1}

\begin{document}

\title{Invariant Representations of Embedded Simplicial Complexes}

\author{\name Taejin Paik \email paiktj@snu.ac.kr \\
       \addr Department of Mathematical Sciences\\
       Seoul National University\\
       Seoul, 08826, South Korea}

\editor{my editor}

\maketitle

\begin{abstract}
Analyzing embedded simplicial complexes, such as triangular meshes and graphs, is an important problem in many fields.
We propose a new approach for analyzing embedded simplicial complexes in a subdivision-invariant and isometry-invariant way using only topological and geometric information.
Our approach is based on creating and analyzing sufficient statistics and uses a graph neural network.
We demonstrate the effectiveness of our approach using a synthetic mesh data set. 
\end{abstract}

\begin{keywords}
    simplicial complex, isometry-invariant, topological data analysis, graph neural network
\end{keywords}

\section{Introduction}
Geometric objects such as triangular meshes and graphs are commonly used in various applications, including computer graphics, computer vision, and machine learning.
We use a simplicial complex as a mathematical model for such meshes and graphs, which provides a more general framework for analyzing topological spaces.
Analyzing these objects is often challenging, as they have complex structures that can be difficult to represent and process.
Recently, several approaches have been proposed to analyze such geometric objects, such as multi-view-based methods, voxel-based methods, and point cloud-based methods.
These methods have proven to be effective for various tasks, including classification, segmentation, and reconstruction.

However, these traditional approaches often have limitations.
For example, some methods may not be robust to subdivision of a simplicial complex, which can lead to different results when an object is subdivided, and some other methods may not be robust to the sampling of points on mesh data sets.
Additionally, these methods may not be isometry-invariant, which can lead to different results when the object is rotated or transformed.
These limitations have motivated the development of new approaches that are more robust and invariant under these types of transformations.

In this paper, we propose a novel approach to analyzing simplicial complexes using only topological and geometric data.
Our approach is designed to be both isometry-invariant and subdivision-invariant, ensuring that the results remain approximately consistent even when the object is rotated, translated, or subdivided.
To achieve this, we use a graph neural network to create an $\on{O}(3)$-equivariant operator, and we use the Euler curve transform which allows us to analyze the object using its topological information.
To the best of our knowledge, our approach is the first to be able to deal with triangular meshes directly and to achieve isometry-invariance and subdivision-invariance.

Our approach is a significant departure from traditional methods.
We mainly concentrate on using topological data of a simplicial complex to generate sufficient statistics that describe the properties of the underlying object.
By doing this, we can analyze the object using a simple and robust representation that is invariant to both subdivision and isometry transformations.

The remainder of the paper is structured as follows.
In Section \ref{sec: Prelim}, we provide background information on semialgebraic sets, simplicial complexes, the Euler curve, and isometries.
In Section \ref{sec: method}, we describe our proposed approach in detail, including the sufficient statistics that we use and the $\on{O}(3)$-equivariant operator via graph neural networks.
In Section \ref{sec: exp}, we present experimental results that demonstrate the effectiveness of our approach with a synthetic mesh data set\footnote{
For the reproducibility of our experiments, we release the source code of our experiments at \url{https://github.com/TJPaik/InvariantRepresentation}.}.
In Section \ref{sec: discussion}, we provide a discussion of our approach, show limitations and suggest directions for future work.
In Section Appendix \ref{appensix: sec: proof}, we provide the proofs of the propositions and theorems we use.


\section{Preliminaries}
\label{sec: Prelim}

In this section, we aim to provide a comprehensive overview of the foundational concepts necessary for this paper.
This includes a brief overview of semialgebraic sets, simplicial complexes, a definition of subdivision, a filtration, persistent homology, and isometry, as well as the establishment of relevant notations.

The validity of our approach is deeply rooted in the concept of the Euler characteristic.
In order to ensure that the Euler characteristic is well-defined, we first consider semialgebraic sets in the following subsection.

\subsection{Semialgebraic Set}
A semialgebraic set is a fundamental concept in algebraic geometry and real algebraic geometry.
A semialgebraic set is a subset of Euclidean space defined by a finite number of polynomial inequalities with real coefficients.
The notion of semialgebraic sets is a natural generalization of the notion of algebraic sets, which are defined by polynomial equations.
Unlike algebraic sets, semialgebraic sets can have a nonempty interior, which makes them a more flexible tool in many applications.

Formally, a \textbf{semialgebraic set} is a subset of $n$-dimensional Euclidean space that can be expressed as a finite union or intersection of sets of two types: 
\begin{enumerate}
    \item sets defined by polynomial inequalities of the form $\{\bar{x}\in\R^n: f(\bar{x})>0\}$ and
    \item sets defined by polynomial equations of the form $\{\bar{x}\in\R^n: g(\bar{x})=0\}$,
\end{enumerate}
where $f$ and $g$ are polynomials in $\bar{x}= (x_1, \dots, x_n)$ with real coefficients.

We note that the class of semialgebraic sets is an important example of an o-minimal structure.
One of the main properties of o-minimal structures is that they admit a well-defined notion of the Euler characteristic, which is a topological invariant that measures the ``shape'' or ``structure'' of a space.
That is, for each semialgebraic set $S\subseteq \R^n$, there is a finite partition into cells, and the \textbf{Euler characteristic} with respect to the partition $\mcal{C}$ is defined as 
$$
\chi_{\mathcal{C}}(S):=k_0-k_1+\dots+(-1)^d k_d+\dots+(-1)^n k_n
$$
where $k_d$ is the number of $d$-dimensional cells in $\mathcal{C}$.
Not surprisingly, as stated in \citet{van1998tame}, the Euler characteristic does not depend on the partition:
\begin{proposition}
    For a semialgebraic set $A\subseteq\R^n$, assume that we have two finite partitions $\mcal{C}$ and $\mcal{C}^\prime$.
    Then, we have
    $$
    \chi_{\mathcal{C}}(S)=\chi_{\mathcal{C}^{\prime}}(S).
    $$
\end{proposition}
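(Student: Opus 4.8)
The plan is to reduce to comparing $\mathcal{C}$ with a common refinement, run a ``Fubini'' computation along a coordinate projection, and induct on the ambient dimension $n$, with the one-dimensional case as the base. First I would invoke the o-minimal cell decomposition theorem to produce a cell decomposition $\mathcal{D}$ of $S$ refining both $\mathcal{C}$ and $\mathcal{C}'$. Since the defining alternating sum is additive over the cells from which it is computed --- grouping the cells of $\mathcal{D}$ according to the cell of $\mathcal{C}$ containing them gives $\chi_{\mathcal{D}}(S)=\sum_{C\in\mathcal{C}}\sum_{D\in\mathcal{D},\,D\subseteq C}(-1)^{\dim D}$ --- it suffices to prove the single-cell statement: for every cell $C$ of dimension $d$ and every partition of $C$ into cells $\mathcal{E}$, one has $\sum_{E\in\mathcal{E}}(-1)^{\dim E}=(-1)^{d}$. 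Applying this to $\mathcal{C}$ versus $\mathcal{D}$ and to $\mathcal{C}'$ versus $\mathcal{D}$ then gives $\chi_{\mathcal{C}}(S)=\chi_{\mathcal{D}}(S)=\chi_{\mathcal{C}'}(S)$.

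The heart of the argument is the behavior under a coordinate projection $\pi\colon\R^{n}\to\R^{n-1}$. For a cell decomposition compatible with $\pi$ --- its cells projecting onto the cells of a cell decomposition of the image, each column over a base cell $B$ consisting of finitely many graphs of continuous semialgebraic functions on $B$ together with the bands between consecutive graphs --- the cells lying over a fixed $B$ contribute to the alternating sum the amount $(-1)^{\dim B}\bigl(\#\{\text{graphs over }B\}-\#\{\text{bands over }B\}\bigr)=(-1)^{\dim B}\chi^{1}(S_b)$, where $S_b\subseteq\R$ is the (combinatorially constant) fiber over $b\in B$ and $\chi^{1}$ is the elementary one-dimensional invariant $\#\text{points}-\#\text{intervals}$ of a finite union of points and open intervals. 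The crucial observation is that $\chi^{1}(S_b)$ is insensitive to how the column is cut: inserting one more graph into a band turns that band into band--graph--band, adding one band and one graph and so leaving $\#\text{graphs}-\#\text{bands}$ fixed. Summing over base cells yields the Fubini identity $\chi_{\mathcal{D}}(S)=\sum_{B}(-1)^{\dim B}\chi^{1}(S_b)$; specialized to a single $d$-cell $C$ it shows $\sum_{E\in\mathcal{E}}(-1)^{\dim E}$ equals $\pm\,\chi_{\pi(\mathcal{E})}(\pi(C))$, with sign $+1$ when $C$ is a graph over $\pi(C)$ and $-1$ when $C$ is a band.

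I would then induct on $n$. The base case $n=1$ is the elementary fact that any partition of a semialgebraic subset of $\R$ into points and open intervals has the same value of $\#\text{points}-\#\text{intervals}$, checked directly by passing to the coarsest such partition. For the inductive step the single-cell statement for $C\subseteq\R^{n}$ follows from the $\pm\,\chi_{\pi(\mathcal{E})}(\pi(C))$ formula together with the inductive hypothesis in $\R^{n-1}$, which gives $\chi_{\pi(\mathcal{E})}(\pi(C))=(-1)^{\dim\pi(C)}$; the signs then reconcile to $(-1)^{d}$ in both the graph and band cases. I expect the main obstacle to be precisely the step the Fubini computation appears to skip: an arbitrary partition $\mathcal{E}$ of $C$ into cells need not be compatible with $\pi$, so one must first refine it to a $\pi$-compatible decomposition and check that this refinement leaves the alternating sum unchanged --- and doing so without circularity forces one to prove the single-cell statement and the Fubini identity \emph{simultaneously} in the induction on $n$, exploiting that the fiber type is literally constant over a base cell so that the $\R^{n-1}$ hypothesis closes the loop. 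This bookkeeping is the substance of the development cited from \citet{van1998tame}; an alternative route is to refine any cell decomposition to a semialgebraic triangulation and appeal to the classical homeomorphism-invariance of the Euler characteristic of a finite complex, though one still owes the check that passing from cells to simplices preserves the alternating sum.
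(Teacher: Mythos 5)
Your outline is correct and is essentially the standard argument: reduce to a common cell decomposition, prove the single-cell identity $\sum_{E\in\mathcal{E}}(-1)^{\dim E}=(-1)^{\dim C}$ by induction on the ambient dimension via the graphs-and-bands structure of decompositions over a coordinate projection, and note that the fiberwise count $\#\text{graphs}-\#\text{bands}$ is refinement-invariant; you also correctly flag that the refinement to a $\pi$-compatible decomposition is where the real bookkeeping lives. The paper itself gives no proof of this proposition and simply defers to \citet{van1998tame}, which is precisely the source of the argument you sketch, so there is nothing to compare beyond noting that your plan reproduces the cited development faithfully.
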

Therefore, we can simplify the notation and refer to the Euler characteristic of a semialgebraic set $S\subseteq \R^n$ by simply writing $\chi(S)$.
For a more comprehensive understanding of the concept, refer to \citet{van1998tame}.

In the following, we take a brief look at certain concepts from the Euler calculus that is useful later on.
Let $X$ be a semialgebraic set in $\mbb{R}^n$.
We call an integer-valued function $f:X\rightarrow \mbb{Z}$ \textbf{constructible} if $f^{-1}(i)$ is semialgebraic subset for every $i\in \mbb{Z}$.
We denote the set of bounded compactly supported constructible functions on $X$ as $\operatorname{CF}(X)$.
One of the essential features of the Euler characteristic is its additivity property, which makes it behave like a measure \citep{curry2012euler}.
That is, for semialgebraic sets $A$ and $B$, we have
$$
\chi(A\cup B) = \chi(A) + \chi(b) - \chi(A\cap B).
$$
This additivity is used to define the \textbf{Euler integral} for functions on $\operatorname{CF}(X)$; for every $f \in \operatorname{CF}(X)$, we define
$$
\int_X f\; d\chi := \sum_{i\in \Z} i\cdot \chi(f^{-1}(i)).
$$
The Euler integral is a fundamental component of the Euler calculus.
This technique helps solve problems related to counting and enumeration in computational geometry and sensor networks.
For a more detailed description of the Euler integral and the Euler calculus, see \citet{baryshnikov2011inversion, curry2012euler, mccrory1997algebraically}.

\subsection{Embedded Simplicial Complex}
In this section, we present an introduction to fundamental notions of a simplicial complex from \citet{edelsbrunner2022computational}.

\begin{definition}\label{def: simplicial complex}
A $k$-\textbf{simplex} is a convex hull of $k+1$ affinely independent points in an ambient space $\R^n$.
A \textbf{simplicial complex} $K$ is defined as a finite collection of simplices that satisfy:
\begin{enumerate}
    \item If $\tau$ is a face of $\sigma$ and $\sigma\in K$, then $\tau\in K$.
    \item Assume that $\sigma_0$ and $\sigma_1$ are elements of $K$. Then, $\sigma_0 \cap \sigma_1$ is a face of $\sigma_0$ and $\sigma_1$ if it is not the empty set.
\end{enumerate}
\end{definition}

In this paper, we primarily focus on \textbf{embedded simplicial complex}, which is defined as a union of simplices in a given simplicial complex $K$ with the subspace topology inherited from the ambient Euclidean space.
Essentially, an embedded simplicial complex is a simplicial complex that is embedded in a Euclidean space.
To avoid cumbersome notation, we refer to the embedded simplicial complex simply as $K$, using an abuse of notation.
Note that every simplex is a semialgebraic set, and therefore, embedded simplicial complexes are semialgebraic sets.
Since an embedded simplicial complex $K$ in $\R^n$ can be understood as a union of simplices, which is a semialgebraic cell in $\R^n$, the Euler characteristic of the embedded simplicial complex is
$$
\chi(K) = \sum_{m \geq 0}(-1)^m k_m
$$
where $k_m$ is the number of $m$-simplices.

Here, we introduce some notations for ease of use in later discussions.
For an embedded simplicial complex $K$ in $\R^n$, we write $MK$ to denote $\{M^{-1}x \mid x\in K\}$ for $M\in \operatorname{GL}_n(\R)$ for the convenience of notation later.
Also, for a vector $v$ in $\mbb{R}^n$, we denote the set $\{x + v\mid x\in K\}$ as $K + v$. 
These notations are helpful in making our discussions and equations more concise and easier to understand.

Let $K$ be an embedded simplicial complex.
An embedded simplicial complex $K^\prime$ is called \textbf{subdivision} of $K$ If
\begin{enumerate}
    \item each simplex of $K^\prime$ is contained in a simplex of $K$ and
    \item each simplex of $K$ is finite union of simplices of $K^\prime$.
\end{enumerate}
In other words, $K$ and its subdivision are the same as sets in the ambient space of $K$, but their simplicial structures can be different.

\subsection{Filtration}
A filtration is a powerful tool in the study of topological spaces.
The basic idea is to parameterize a space in a way that allows us to track changes in topology as the parameter changes.
In the context of this paper, filtrations are used to study the persistent homology and the Euler characteristic information of a space, which is a way to detect and quantify topological features.
Formally, a \textbf{filtration} is an indexed family of topological spaces $(X_i)_{i\in I}$ with a totally ordered index set $I$ such that $X_i \subseteq X_j$ for every $i\leq j$.
As we move forward, we provide an example of a filtration that will be of central focus in this work:
let $K$ be an embedded simplicial complex in $\R^n$.
For each unit vector $v$ in the unit sphere $S^{n-1}$ in $\mbb{R}^n$ and each real number $r$, we define 
$$
K_{v, r}:=\left\{x \in \R^n \mid x \cdot v \leq r\right\} \cap K.
$$
It follows that if $r_1\leq r_2$, then $K_{v, r_1} \subseteq K_{v, r_2}$, making $(K_{v, r})_{r\in\R}$ a filtration induced by the unit vector $v$.
Additionally, the set $\{x\in \mathbb{R}^n \mid x\cdot v \leq r\}$ is a semialgebraic set, and the class of semialgebraic sets is closed under finite intersections.
Therefore, the Euler characteristic of $K_{v, r}$ is well-defined.
Consequently, we obtain a curve $r\mapsto \chi(K_{v,r})$ for each unit vector $v$, and we call this curve the \textbf{Euler curve}.

Now we briefly introduce persistent homology, which is a powerful tool used to extract topological features from a filtration.
Suppose we have a filtration $(X_i)_{i\in \mbb{R}}$, where $X_i$ is a topological space for each index $i$.
We can then apply the homology functor $H_*$ with a finite field coefficient $\Z_p$ to obtain the homology groups $H_*(X_i;\Z_p)$ for each index $i$.
If each of these homology groups is finite-dimensional, then the persistence module $(H_*(X_i; \Z_p))_{i\in\mbb{R}}$ can be decomposed into a direct sum of interval modules \citep{oudot2017persistence, chazal2016structure}.

An interval module is represented by an interval of the form $(b, d)$, $[b, d)$, $(b, d]$, or $[b, d]$, where $b,d\in \overline{\R}$ where $\overline{\R}$ is the extended real line $[-\infty, \infty]$.
\textbf{Persistence diagram} is a compact representation of the topological features that persist throughout the filtration, and it allows us to analyze the evolution of the homology classes in a concise and intuitive manner.
To obtain the persistence diagram, we collect all the intervals and represent them as a multi-set of points on the (extended) coordinate plane.
The $x$-coordinate of each point corresponds to the birth time of a homology class, while the $y$-coordinate corresponds to its death time.
If an interval has infinite length, then we say that the corresponding entry in the persistence diagram has infinite persistence or infinite lifetime.

Persistence landscape is another representation of a persistence diagram.
It was introduced by \citet{bubenik2015statistical} as a way to vectorize the persistence diagram.
In contrast to the persistence diagram, which is a collection of points in the plane, the persistence landscape is a collection of functions defined on $\R$:
\begin{definition}
\label{def: PL}
Let $(X_i)_{i\in \R}$ be a filtration.
The \textbf{persistence landscape} is a sequence of functions $\{\lambda_c\}_{c\in\N}$ such that
\begin{align*}
    \lambda_c : \R&\longrightarrow\overline{\R}\\
    t&\longmapsto\sup \left\{m \geq 0 \mid \beta^{t-m, t+m} \geq c\right\},
\end{align*}
where $\beta^{a, b}=\dim(\on{Im}(X_a \rightarrow X_b))$.
\end{definition}
The persistence landscape has been proven to be an effective tool in various applications of topological data analysis.
One notable result in this area is the theorem that shows the invertibility of this conversion under certain conditions \citep{bubenik2020persistence}.
Specifically, if we restrict our attention to persistence diagrams with finitely many entries and there is no entry with infinite persistence, then we can obtain a unique persistence landscape for a given diagram.
The importance of this result stems from the fact that we do not lose any information through the conversion from persistence diagrams to persistence landscapes.

The approach of obtaining such homology features from a filtration, or sometimes the homology features itself is commonly referred to as a \textbf{persistent homology}.
To get a persistence homology, we usually need to choose a finite field for a coefficient.
In this paper, if we do not provide any specific field, it is implied that a finite field $\Z_p$ is being used for a prime $p$.

For our case, we deal with a filtration $\left(K_{v, r}\right)_{r \in \mathbb{R}}$ for an embedded simplicial complex $K$ and a unit vector $v$.
Therefore, we have persistence landscapes for each unit vector.
We denote the corresponding persistence landscape as $\{\lambda_{v, c}^K\}_{c\in \N}$.

Note that if there are finitely many entries in a persistence diagram with finite persistence, then the corresponding persistence landscape can be formalized using a different formula.
Specifically, Given a filtration, assume that the corresponding persistence diagram is a multi-set $\{(b_i, d_i)\}_{i\in I}$ where $I$ is a finite set, and $|b_i|$ and $|d_i|$ are finite for all $i\in I$.
Then, the persistence landscape can be defined via the formula
\begin{equation}\label{eq: pl_different rep}
\lambda_c(t)=\operatorname{cmax}\left\{f_{\left(b_{i}, d_i\right)}(t)\right\}_{i \in I}
\end{equation}
where $f_{(a, b)}(t)=\max (0, \min (a+t, b-t))$ and $\operatorname{cmax}$ denotes the $c$-th largest element \citep{bubenik2020persistence}.

\subsubsection{Computing the Euler Characteristic}
In this paper, calculating the Euler characteristic of an embedded simplicial complex is an important task.
One way to do this is by counting the number of simplices for each dimension, which is a straightforward approach.
Other than that, an alternative method for computing the Euler characteristic is by using the alternating sum of Betti numbers, which are topological invariants that measure the number of holes in the complex.
Interestingly, it is possible to obtain the Euler characteristic without having to determine all the Betti numbers explicitly using persistent homology:
\begin{theorem}\label{thm: finite field persistent}
    If $K$ is an embedded simplicial complex in $\mbb{R}^n$, then its Euler characteristic is independent of the base ring $G$. That is, 
    $$
    \chi(K) = \sum_{i=0}^n (-1)^i \operatorname{rk}_G H_i(K;G)
    $$
    where $\operatorname{rk}_G$ is the rank over the base ring $G$.
\end{theorem}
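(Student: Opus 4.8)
The plan is to reduce the statement to the Euler--Poincar\'e principle for the simplicial chain complex of $K$. Fix a commutative ring $G$; for our purposes it suffices to take $G$ a field (as in the persistent homology setting of this paper, where $G=\Z_p$), or more generally an integral domain, so that the rank $\operatorname{rk}_G$ of a finitely generated $G$-module is well defined and additive on short exact sequences. Since $K$ is a \emph{finite} simplicial complex all of whose simplices have dimension at most $n$, its simplicial chain complex $C_\bullet(K;G)$ is a bounded complex of free $G$-modules with $C_m(K;G)\cong G^{k_m}$, where $k_m$ is the number of $m$-simplices of $K$. Hence $\operatorname{rk}_G C_m(K;G)=k_m$ is independent of $G$, and $\sum_{m=0}^{n}(-1)^m\operatorname{rk}_G C_m(K;G)=\chi(K)$ by the counting formula recorded in Section~\ref{sec: Prelim}.

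The next step is to show that the alternating sum of ranks of a bounded complex equals the alternating sum of ranks of its homology. Writing $Z_m=\ker\partial_m$ and $B_m=\operatorname{im}\partial_{m+1}$, the short exact sequences
$$0\to Z_m\to C_m(K;G)\to B_{m-1}\to 0\qquad\text{and}\qquad 0\to B_m\to Z_m\to H_m(K;G)\to 0$$
together with additivity of $\operatorname{rk}_G$ give $\operatorname{rk}_G C_m(K;G)=\operatorname{rk}_G B_m+\operatorname{rk}_G B_{m-1}+\operatorname{rk}_G H_m(K;G)$. Multiplying by $(-1)^m$ and summing over $m$, the $B$-terms telescope away (using that $C_m(K;G)$, hence $B_m$, vanishes outside a finite range), leaving $\sum_m(-1)^m\operatorname{rk}_G C_m(K;G)=\sum_m(-1)^m\operatorname{rk}_G H_m(K;G)$. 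Comparing with the previous paragraph yields $\chi(K)=\sum_{i=0}^{n}(-1)^i\operatorname{rk}_G H_i(K;G)$, and since the left-hand side does not involve $G$, neither does the right-hand side.

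The only genuinely delicate point is the justification that $\operatorname{rk}_G$ is additive on short exact sequences for whatever class of rings one allows. For $G$ a field this is just rank--nullity; for $G$ a PID it follows from the structure theorem for finitely generated modules; for a general integral domain one can \emph{define} $\operatorname{rk}_G M:=\dim_{\operatorname{Frac}(G)}(M\otimes_G\operatorname{Frac}(G))$ and use exactness of localization. I would state and prove the theorem in the generality actually needed later (finite fields $\Z_p$), where this issue disappears, and add a remark on the extension. A minor bookkeeping point is that every simplex of $K\subseteq\R^n$ has dimension $\le n$, so both $C_\bullet(K;G)$ and $H_\bullet(K;G)$ vanish in degrees above $n$ and the finite sum in the statement has the correct range.
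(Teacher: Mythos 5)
Your proposal is correct and follows essentially the same route as the paper's proof: both are the Euler--Poincar\'e argument, computing $\operatorname{rk}_G H_i$ as $\operatorname{rk}_G\ker d_i-\operatorname{rk}_G\operatorname{im} d_{i+1}$, telescoping the alternating sum down to $\sum_i(-1)^i\operatorname{rk}_G C_i(K;G)$ via the dimension theorem, and observing that the chain groups are free of rank $k_i$ independently of $G$. Your added caveat about which rings admit an additive rank is a reasonable refinement the paper glosses over, but it does not change the argument.
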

For an embedded simplicial complex $K$, most TDA (topological data analysis) software tools compute the persistent homology using a finite field coefficient $\Z_p$ for a prime $p$ \citep{maria2014gudhi,tauzin2021giotto,bauer2021ripser, bauer2017phat, tausz2012javaplex}.
Assuming that we have obtained the persistent homology for each dimension, we can calculate the rank of $H_*(K_{v, r};\Z_p)$ for each filtration value $r$.
However, this rank is not necessarily the same as the Betti number.
Nevertheless, from Theorem \ref{thm: finite field persistent}, we can obtain the Euler characteristic for each filtration value using the persistent homology.
The advantage of using persistent homology in this context is that it can provide a more comprehensive and robust analysis of the topological structure of the complex.
For our case, we can calculate the Euler curve $r\mapsto \chi(K_{v,r})$ using the persistent homology $(H_*(K_{v, r};\Z_p))_{r\in \R}$ for each $v$.

\begin{remark}
The Euler characteristic can depend on the base ring if $X$ is a more general topological space. For example, consider $X=\mbb{RP}^\infty$.
Its integer homology and homology with $\mbb{Z}_2$-coefficients are given by
\[
H_*(\mbb{RP}^\infty;\mbb{Z})
=\begin{cases}
\mbb{Z} & *=0 \\
\mbb{Z}_2& *>0 \text{ odd } \\
0 & \text{otherwise}
\end{cases}
\text{,\quad and\quad }
H_*(\mbb{RP}^\infty;\mbb{Z}_2)
=\begin{cases}
\mbb{Z}_2 & * \geq 0 \\
0 & \text{otherwise,}
\end{cases}
\]
respectively. This means that $\chi(\mbb{RP}^\infty;\mbb{Z})=1$, but $\chi(\mbb{RP}^\infty;\mbb{Z}_2)$ is undefined.
\end{remark}

\subsection{Isometry}
Before we proceed to the following section, we discuss an essential and straightforward concept called isometry.
This idea is significant and necessary for our study, so it is crucial to cover it before moving on.
First, we present a definition of the isometry:
\begin{definition}
    If a function $f:\mbb{R}^n\rightarrow\mbb{R}^n$ satisfies
    $$
    \|f(x)-f(y)\|=\|x - y\|
    $$
    for every $x, y\in \mbb{R}^n$, then we call $f$ an \textbf{isometry} or \textbf{rigid motion}.
\end{definition}

There are typical examples of isometries in three-dimensional Euclidean space: translations, rotations, and reflections.
Rotations and reflections can be represented by elements of $\on{O}(3)$, the set of orthogonal matrices.
It is also worth noting that generally, in a Euclidean space of dimension $n$, translations and linear functions induced from elements of $\on{O}(n)$ serve as clear examples of isometries.
On the other hand, any isometry can be uniquely expressed as a combination of these functions:
\begin{proposition}\label{prop: isometry-unique}
    If a function $f:\R^n\rightarrow\R^n$ is an isometry, then there are unique $A\in \on{O}(n)$ and $b\in \R^n$ satisfying
    $$f(x) = Ax + b$$
    for every $x\in \R^n$.
\end{proposition}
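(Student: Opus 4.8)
The plan is to show first that an isometry fixing the origin must be linear and orthogonal, and then reduce the general case to this one by composing with a translation. Uniqueness will then follow by a short separation argument.

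\textbf{Reduction to the origin-fixing case.} Given an isometry $f:\R^n\to\R^n$, set $b:=f(0)$ and define $g(x):=f(x)-b$. Then $g$ is again an isometry (translation by $-b$ is an isometry, and a composition of isometries is an isometry), and $g(0)=0$. If I can show $g(x)=Ax$ for a unique $A\in\on{O}(n)$, then $f(x)=Ax+b$, and conversely any such representation of $f$ forces $b=f(0)$ and $g(x)=f(x)-b$, so it suffices to handle $g$.

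\textbf{An origin-fixing isometry is orthogonal linear.} The key observation is the polarization identity: for all $x,y$,
$$
\langle x,y\rangle=\tfrac12\bigl(\|x\|^2+\|y\|^2-\|x-y\|^2\bigr).
$$
Since $g(0)=0$ we have $\|g(x)\|=\|g(x)-g(0)\|=\|x\|$, so $g$ preserves norms; combined with the distance-preserving property $\|g(x)-g(y)\|=\|x-y\|$ and polarization, $g$ preserves inner products: $\langle g(x),g(y)\rangle=\langle x,y\rangle$ for all $x,y$. Now let $\{e_1,\dots,e_n\}$ be the standard orthonormal basis and put $u_i:=g(e_i)$; these form an orthonormal set, hence an orthonormal basis. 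For arbitrary $x$, writing $x=\sum_i\langle x,e_i\rangle e_i$, expand $g(x)$ in the basis $\{u_i\}$: its $i$-th coordinate is $\langle g(x),u_i\rangle=\langle g(x),g(e_i)\rangle=\langle x,e_i\rangle$. Hence $g(x)=\sum_i\langle x,e_i\rangle u_i$, which is exactly the linear map $A$ with columns $u_i$; since its columns are orthonormal, $A\in\on{O}(n)$.

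\textbf{Uniqueness.} Suppose $Ax+b=A'x+b'$ for all $x$. Evaluating at $x=0$ gives $b=b'$, and then $Ax=A'x$ for all $x$ forces $A=A'$ (apply to the standard basis vectors).

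\textbf{Anticipated main obstacle.} No step is deep, but the one that needs the most care is establishing that a norm-and-inner-product-preserving map is actually \emph{linear} — a priori $g$ is only a set map. The argument above sidesteps additivity/homogeneity checks by directly computing the coordinates of $g(x)$ against the orthonormal frame $\{g(e_i)\}$ and recognizing the result as a linear expression in $x$; the only thing to justify carefully is that $\{g(e_i)\}$ really is a basis (orthonormality gives linear independence, and there are $n$ of them) and that the coordinate formula $g(x)=\sum_i\langle g(x),g(e_i)\rangle\,g(e_i)$ is valid, which is just the expansion of a vector in an orthonormal basis. Everything else is the polarization identity and bookkeeping.
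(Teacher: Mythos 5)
Your proof is correct and complete. Note that the paper itself explicitly omits the proof of this proposition, so there is no argument of the author's to compare against; your argument is the classical one. The reduction to an origin-fixing isometry, the use of the polarization identity to upgrade distance preservation to inner-product preservation, and the observation that the coordinate expansion of $g(x)$ against the orthonormal frame $\{g(e_i)\}$ is literally a linear expression in $x$ (so that linearity never has to be checked separately) are all sound, and the uniqueness step is handled correctly by evaluating at $0$ and at the standard basis vectors. The one point you flagged as delicate — that $\{g(e_i)\}$ is genuinely an orthonormal basis and that the expansion formula applies — is indeed the only place where care is needed, and you have justified it adequately: $n$ orthonormal vectors in $\R^n$ span, and the expansion of any vector in an orthonormal basis is standard linear algebra. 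Nothing is missing.
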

We omit the proof. 
From Proposition \ref{prop: isometry-unique}, to show that an operator is invariant under isometry on $\R^n$, the above proposition states that it is sufficient to demonstrate that the operator is both $\on{O}(n)$-invariant and translation-invariant.
This greatly simplifies the task of proving invariance under isometry, as it reduces the problem to showing invariance under just two simpler types of transformations.

\section{Theoretical Results and Methods}
\label{sec: method}
In this section, we present our methodology and the corresponding theoretical research.
We introduce the Euler curve transform and the persistent homology transform, and $\on{O}(3)$-equivariant graph neural networks.
Then, Combined together, these transformations provide both isometry-invariant and subdivision-invariant representations of embedded simplicial complexes.
After explaining the theory behind these transformations, we finally present our implementation for taking advantage of these new methods in practice.

\subsection{Euler Curve Transform}

In this subsection, we introduce the Euler curve transform, along with a relevant theorem that characterizes this transformation.
The Euler curve transform is an operator denoted as $\mcal{R}$ that maps from $\on{CF}(\R^n)$ to $\on{CF}(S^{n-1}\times \R)$.
Specifically, the Euler curve transform can be defined through the Euler integral:
$$\mathcal{R}(f)(v, r)=\int_{\mathbb{R}^n} f(x) \cdot 1_{x \cdot v \leq r}(x) \;d\chi(x),$$
where $f$ is a bounded compactly supported constructible function and $1_A$ is the characteristic function of a set $A$.

Now, let us consider a case where $f$ is the characteristic function of an embedded simplicial complex $K$, denoted as $1_K$.
Since $K$ is a semialgebraic set, we know that $1_K$ is a bounded compactly supported constructible function.
In this case, if we apply the Euler curve transform to $1_K$, we obtain:
$$
\mcal{R}(1_K)(v, r) = \int_{\mbb{R}^n} 1_{\{
x\in K \mid x\cdot v \leq r
\}}\; d\chi = \chi(K_{v, r}).
$$
In other words, the Euler curve transform of the characteristic function of an embedded simplicial complex $K$ is simply the Euler characteristic of the intersection of $K$ with a half-space defined by a hyperplane perpendicular to the direction $v$.

A relevant property of the Euler curve transform is that it is injective, as stated in \citet{ghrist2018persistent}:
\begin{theorem}
    The Euler curve transform $\mcal{R}$ is injective.
\end{theorem}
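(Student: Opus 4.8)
The plan is to reduce the statement to the known invertibility of the Euler-characteristic Radon transform over affine hyperplanes, that is, to Schapira's tomographic inversion formula. Since $\mcal{R}$ is linear on $\on{CF}(\R^n)$ (the Euler integral is additive), injectivity is equivalent to the assertion that $\mcal{R}(h)\equiv 0$ forces $h=0$ for every bounded compactly supported constructible function $h$. So fix such an $h$ with $\mcal{R}(h)\equiv 0$; the goal is to conclude $h=0$.

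First I would fix a unit vector $v\in S^{n-1}$ and study the one-variable function $r\mapsto\mcal{R}(h)(v,r)=\int_{\R^n}h(x)\cdot 1_{x\cdot v\le r}(x)\,d\chi(x)$. Choosing a finite cell decomposition of $\R^n$ adapted simultaneously to the level sets of $h$ and to the linear form $x\mapsto x\cdot v$, this function is piecewise constant in $r$ with finitely many jumps, hence itself a compactly supported constructible function on $\R$. The key local computation is that its jump at a value $r_0$ equals $\int_{\{x\cdot v=r_0\}}h\,d\chi$: away from the finitely many critical values of $x\cdot v$ on the cells, the intersection pattern is constant, so $\int h\cdot 1_{x\cdot v\le r}\,d\chi$ is locally constant and agrees with $\int h\cdot 1_{x\cdot v<r_0}\,d\chi$ for $r$ slightly below $r_0$; then $1_{x\cdot v\le r_0}-1_{x\cdot v<r_0}=1_{x\cdot v=r_0}$ gives the claim by additivity. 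Consequently $\mcal{R}(h)\equiv 0$ implies $\int_{\{x\cdot v=r\}}h\,d\chi=0$ for every affine hyperplane $\{x\cdot v=r\}$; in other words, the Euler-characteristic Radon transform of $h$ vanishes identically.

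The last step is to invoke Schapira's inversion theorem for the Radon transform of constructible functions over affine hyperplanes: there is a dual transform $R^\vee$ and constants $\alpha\neq 0$, $\beta$ depending only on $n$ (and on the parity of $n$, which is the source of the usual even/odd dichotomy) such that $R^\vee R\,h=\alpha\,h+\beta\bigl(\int_{\R^n}h\,d\chi\bigr)\cdot 1_{\R^n}$. Since $h$ is compactly supported while $1_{\R^n}$ is not, the relation $\alpha\,h=-\beta\bigl(\int h\,d\chi\bigr)\cdot 1_{\R^n}$ forces $\int h\,d\chi=0$ and hence $h=0$, as desired. The degenerate case $n=1$, where the ``hyperplanes'' are points, should be disposed of directly.

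I expect the main obstacle to be the jump identity in the second step, because the Euler integral is only finitely additive and is \emph{not} continuous under arbitrary monotone limits of semialgebraic sets; making the argument rigorous requires the tameness input, namely a single cell decomposition compatible with both $h$ and the whole family of half-spaces $\{x\cdot v\le r\}$, so that only finitely many ``events'' occur as $r$ varies. A secondary point requiring care is quoting the affine (rather than projective) version of Schapira's formula with the correct constants, so that compact support genuinely eliminates the ambiguous global-constant term in every dimension.
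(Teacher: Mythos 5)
The paper does not prove this theorem itself; it cites \citet{ghrist2018persistent}, whose argument is exactly the Schapira-inversion strategy you are aiming at, and your overall architecture (linearity of $\mcal{R}$, reduction to a Radon-type transform, Schapira's formula $R^\vee R\,h=\alpha h+\beta(\int h\,d\chi)\cdot 1_{\R^n}$ with $\alpha\neq 0$, compact support killing the constant term) is the right one. However, the step you yourself flag as the main obstacle --- the jump identity --- is genuinely false, and the cell-decomposition patch you propose does not repair it. Concretely, take $n=1$, $h=1_{[0,1]}$, $v=1$, $r_0=1$ (equivalently, a single closed edge of a complex, $v$ along the edge, $r_0$ its top vertex): the Euler curve $r\mapsto\chi([0,1]\cap(-\infty,r])$ equals $1$ for all $r\in[0,\infty)$, so it has no jump at $r_0=1$, yet $\int_{\{x=1\}}h\,d\chi=1$. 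The failure is not about ``finitely many events'': even with a cylindrical decomposition adapted to $h$ and to $x\mapsto x\cdot v$, one has $\lim_{r\to r_0^-}\int h\cdot 1_{x\cdot v\le r}\,d\chi\neq\int h\cdot 1_{x\cdot v<r_0}\,d\chi$, because the combinatorial (compactly supported) Euler characteristic used here gives $\chi((a,r])=0$ for $a<r<b$ while $\chi((a,b))=-1$; the Euler curve is right-continuous but not left-continuous along the filtration, so its left jump mixes the contributions of cells ending at $r_0$ with the wrong signs.

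Two standard repairs exist, and either one completes your proof. (a) Keep the hyperplane route but extract the hyperplane Radon transform from antipodal directions rather than from jumps: since $1_{x\cdot v\le r}+1_{x\cdot v\ge r}=1+1_{x\cdot v=r}$, additivity gives $\int_{\{x\cdot v=r\}}h\,d\chi=\mcal{R}(h)(v,r)+\mcal{R}(h)(-v,-r)-\int_{\R^n}h\,d\chi$, and the last term equals $\mcal{R}(h)(v,r)$ for $r$ beyond the support of $h$; thus $\mcal{R}(h)\equiv 0$ does force the hyperplane transform to vanish, after which your appeal to Schapira goes through. (b) Skip hyperplanes entirely, as the cited source does: apply Schapira's inversion directly to the half-space kernel $S=\{(x,(v,r)):x\cdot v\le r\}$ with dual $S'=\{((v,r),x):x\cdot v\ge r\}$; the fiber over a pair $x\neq x'$ is a closed hemisphere's worth of data with Euler characteristic $1$, while the fiber over the diagonal is a copy of $S^{n-1}$ with Euler characteristic $1+(-1)^{n-1}$, so $\mu-\lambda=(-1)^{n-1}\neq 0$ in every dimension and no $n=1$ special case is needed.
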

In other words, if there exist two distinct embedded simplicial complexes $K_1$ and $K_2$, the resulting Euler curve transforms $\mcal{R}(1_{K_1})$ and $\mcal{R}(1_{K_2})$ are different.
This property can be used to classify embedded simplicial complexes.
For instance, we can classify images by converting them to embedded simplicial complexes and applying the Euler curve transform as described in \citet{jiang2020weighted}.

To simplify notation, for each embedded simplicial complex $K$, we define a function $\mcal{F}_K$:
\begin{align*}
\mcal{F}_K : S^{n-1}\times \mbb{R}&\longrightarrow \mbb{R}\\
(v, r)&\longmapsto \chi(K_{v, r}).
\end{align*}
Then, by the injectivity, $K_1 \neq K_2$ implies $\mcal{F}_{K_1} \neq \mcal{F}_{K_2}$.

The following proposition shows how the result of the Euler curve transform changes for $\on{O}(n)$-transformations and parallel transformations.
\begin{proposition}\label{prop: ECT equiv}
Let $K$ be an embedded simplicial complex in $\R^n$.
Then, for $R\in \on{O}(n)$ and $w\in \R^n$,
$$
\mcal{F}_{RK + w}(v, r) = \mcal{F}_{K}(Rv, r-v\cdot w).
$$
\end{proposition}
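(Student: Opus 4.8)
The plan is to reduce the identity to the invariance of the Euler characteristic under an affine change of coordinates. First I would unwind the definitions: by the definition of $\mcal{F}$ and of the filtration,
$$
\mcal{F}_{RK+w}(v,r)=\chi\big((RK+w)_{v,r}\big),\qquad (RK+w)_{v,r}=\{y\in\R^n\mid y\cdot v\le r\}\cap(RK+w),
$$
and recall from the notation established earlier that $RK+w=\{R^{-1}x+w\mid x\in K\}$.

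Next I would introduce the affine map $\phi:\R^n\rightarrow\R^n$, $\phi(x)=R^{-1}x+w$, which is an affine isomorphism (hence a semialgebraic homeomorphism of $\R^n$) with $\phi(K)=RK+w$. The key computation is to track the half-space condition under $\phi$: for $y=\phi(x)$,
$$
y\cdot v=(R^{-1}x)\cdot v+w\cdot v=x\cdot(Rv)+v\cdot w,
$$
where I used that $R\in\on{O}(n)$ implies $R^{-1}=R^{\mathsf T}$, so $(R^{-1})^{\mathsf T}=R$ and therefore $(R^{-1}x)\cdot v=x^{\mathsf T}Rv=x\cdot(Rv)$. Hence $y\cdot v\le r$ if and only if $x\cdot(Rv)\le r-v\cdot w$, so $\phi$ restricts to a bijection
$$
\phi:K_{Rv,\,r-v\cdot w}\ \xrightarrow{\ \sim\ }\ (RK+w)_{v,r}.
$$

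Finally, since $\phi$ is an affine isomorphism, it carries a finite semialgebraic cell partition $\mcal{C}$ of $K_{Rv,\,r-v\cdot w}$ to a finite semialgebraic cell partition $\phi(\mcal{C})$ of $(RK+w)_{v,r}$ with the same number of $d$-cells for each $d$ (affine maps preserve dimension and semialgebraicity of cells); by the well-definedness of the Euler characteristic (its independence of the chosen partition) we get
$$
\mcal{F}_{RK+w}(v,r)=\chi_{\phi(\mcal{C})}\big((RK+w)_{v,r}\big)=\chi_{\mcal{C}}\big(K_{Rv,\,r-v\cdot w}\big)=\mcal{F}_K(Rv,\,r-v\cdot w).
$$
The only subtlety I anticipate is bookkeeping with the convention $MK=\{M^{-1}x\mid x\in K\}$, which is what makes the direction transform as $v\mapsto Rv$ rather than $v\mapsto R^{-1}v$, together with making explicit that $\chi$ is unchanged under $\phi$ — immediate once one notes $\phi$ pushes cell partitions to cell partitions. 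Alternatively, the same argument can be phrased via the pushforward of constructible functions under $\phi$ in the Euler-integral formulation of $\mcal{R}$, but the homeomorphism argument is the most direct.
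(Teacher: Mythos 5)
Your proposal is correct and follows essentially the same route as the paper's proof: both unwind the convention $RK+w=\{R^{-1}x+w\mid x\in K\}$, track the half-space condition to obtain $(RK+w)_{v,r}=R(K_{Rv,\,r-v\cdot w})+w$, and conclude by invariance of the Euler characteristic under the affine isomorphism. Your extra remark justifying that invariance via transported cell partitions is a detail the paper leaves implicit, but it is not a different argument.
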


That is, the transform is $\on{O}(n)$-equivariant, and if the embedded simplicial complex is translated, then the resulting function is also translated.

Note that a characteristic function of an embedded simplicial complex does not dependent on the division of the complex:
let $K$ be a simplicial complex and $K^\prime$ be a subdivision of $K$. 
Since the characteristic functions $1_K$ and $1_{K^\prime}$ are the same, the Euler curve transform is subdivision-invariant.

\subsection{Persistent Homology Transform}
\label{subsec: PHT}
Similarly, for an embedded simplicial complex $K$ in $\R^n$, we can consider the persistent homology transform $\mcal{G}_K$ that is defined via the formula
\begin{align*}
\mcal{G}_K : S^{n-1}&\longrightarrow \mcal{B}\\
v&\longmapsto B_K^v
\end{align*}
where $B_K^v$ is the persistence diagram induced by $v$ and $\mcal{B}$ is the set of persistence diagrams.
It can be proved that this transform is stable for a change in a direction of a unit vector:
\begin{proposition}\label{prop: Wasserstein stable}
    Let $K$ be an embedded simplicial complex. 
    With regard to the bottleneck distance, the function $\mcal{G}_K$ is Lipschitz.
\end{proposition}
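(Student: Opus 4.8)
The plan is to identify the filtration $(K_{v,r})_{r\in\R}$ with the sublevel-set filtration of the linear height function $h_v\colon K\to\R$, $x\mapsto x\cdot v$, and then combine the classical stability theorem for persistence diagrams with an elementary Cauchy--Schwarz estimate. Throughout, write $d_B$ for the bottleneck distance on $\mcal{B}$.

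First I would observe that $K_{v,r}=h_v^{-1}\big((-\infty,r]\big)$, so $B_K^v$ is exactly the persistence diagram of the sublevel-set filtration of $h_v$. Since $K$ is a finite embedded simplicial complex it is compact, and $h_v$ restricts to an affine function on each simplex of $K$; hence $h_v|_K$ is a piecewise-linear function on a finite complex. Such a function is tame: it has finitely many homological critical values and the relevant homology groups (with $\Z_p$-coefficients) are finite-dimensional. Consequently $B_K^v$ is well-defined, consisting of finitely many off-diagonal points together with finitely many points of infinite persistence coming from the essential classes of $H_*(K;\Z_p)$, and the stability theorem for persistence diagrams (Cohen-Steiner, Edelsbrunner, Harer) applies: for any unit vectors $v,w\in S^{n-1}$,
$$
d_B\big(B_K^v,B_K^w\big)=d_B\big(\mathrm{Dgm}(h_v|_K),\mathrm{Dgm}(h_w|_K)\big)\le\|h_v|_K-h_w|_K\|_\infty .
$$
Next I would estimate the sup-norm on the right. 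For every $x\in K$, by Cauchy--Schwarz,
$$
|h_v(x)-h_w(x)|=|x\cdot(v-w)|\le\|x\|\,\|v-w\|\le R_K\,\|v-w\|,
$$
where $R_K:=\max_{x\in K}\|x\|<\infty$ by compactness of $K$. Taking the supremum over $x\in K$ and combining with the previous display gives $d_B(B_K^v,B_K^w)\le R_K\,\|v-w\|$, so $\mcal{G}_K$ is $R_K$-Lipschitz with respect to the Euclidean distance on $S^{n-1}$, hence also with respect to the geodesic distance, which dominates it.

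The point needing the most care is the invocation of the stability theorem: one must confirm tameness of $h_v|_K$, which is precisely what the piecewise-linear structure on the finite complex $K$ supplies, and one must account for the points of infinite persistence — in any admissible bottleneck matching these essential classes can only be matched to one another, and their birth coordinates still move by at most $\|h_v|_K-h_w|_K\|_\infty$, so the bound is unaffected. An alternative that sidesteps the classical statement is to build the interleaving by hand: the pointwise bound above yields the inclusions $K_{v,r}\subseteq K_{w,\,r+R_K\|v-w\|}$ and $K_{w,r}\subseteq K_{v,\,r+R_K\|v-w\|}$, which give an $R_K\|v-w\|$-interleaving of the persistence modules $\big(H_*(K_{v,r};\Z_p)\big)_{r\in\R}$ and $\big(H_*(K_{w,r};\Z_p)\big)_{r\in\R}$, and the algebraic stability (isometry) theorem for persistence modules then produces the same conclusion.
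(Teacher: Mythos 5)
Your proof is correct and follows essentially the same route as the paper's: bound $\|h_v - h_w\|_\infty$ by $R_K\|v-w\|$ via Cauchy--Schwarz and compactness of $K$, then invoke the bottleneck stability theorem for sublevel-set persistence. Your additional care about tameness, the matching of infinite-persistence points, and the alternative interleaving argument are welcome refinements, but the core argument is identical.
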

This indicates that persistence diagrams located in nearby regions above $S^{n-1}$ contain similar information.
To put it differently, knowing the persistent homology information of sufficient directions is practically the same as knowing those of all points.

The persistent homology transform, as the Euler curve transforms, possesses some important properties that are essential for effective data analysis and interpretation:
\begin{proposition}\label{prop: diagram o3 and trans}
    Let $K$ be an embedded simplicial complex in $\R^n$ and $v$ be a unit vector.
    Then, for $R\in\on{O}(n)$, 
    $$
    \mcal{G}_{RK}(v) = \mcal{G}_K(Rv).
    $$
    Furthermore, if $\mcal{G}_K(v) = \{(b_i, d_i)\}_{i\in I}$, we have
    $$
    \mcal{G}_{K+w}(v) = \{(b_i + v\cdot w, d_i + v\cdot w)\}_{i\in I}
    $$
    for $w\in \mbb{R}^n$.
\end{proposition}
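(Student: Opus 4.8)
The plan is to reduce both identities to an explicit description of how the sublevel-set filtration $\left(K_{v,r}\right)_{r\in\R}$ transforms under $K\mapsto RK$ and $K\mapsto K+w$, and then to invoke functoriality of homology. Recall the convention $RK=\{R^{-1}x\mid x\in K\}$, so $y\in RK$ iff $Ry\in K$, and recall that for $R\in\on{O}(n)$ we have $R^{-1}=R^{\mathsf T}$, whence $R^{-1}x\cdot v = x\cdot Rv$ for every $x$. First I would unwind the definition $K_{v,r}=\{x:x\cdot v\le r\}\cap K$ to get
\begin{align*}
(RK)_{v,r} &= \{\,y\in\R^n : y\cdot v\le r\,\}\cap RK = R^{-1}\bigl(\{\,x : x\cdot Rv\le r\,\}\cap K\bigr) = R^{-1}\bigl(K_{Rv,\,r}\bigr),\\
(K+w)_{v,r} &= \{\,y\in\R^n : y\cdot v\le r\,\}\cap (K+w) = \bigl(\{\,x : x\cdot v\le r-v\cdot w\,\}\cap K\bigr)+w = K_{v,\,r-v\cdot w}+w.
\end{align*}

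Next, for the first claim: the map $R^{-1}\colon\R^n\to\R^n$ is a homeomorphism, and for $r_1\le r_2$ it carries the filtration inclusion $K_{Rv,r_1}\hookrightarrow K_{Rv,r_2}$ to the inclusion $R^{-1}(K_{Rv,r_1})\hookrightarrow R^{-1}(K_{Rv,r_2})$ compatibly. Applying the homology functor $H_*(-;\Z_p)$ stage by stage therefore produces an isomorphism of persistence modules $\bigl(H_*((RK)_{v,r})\bigr)_{r\in\R}\cong\bigl(H_*(K_{Rv,r})\bigr)_{r\in\R}$ over the common index $r$. Isomorphic persistence modules share the same interval decomposition, hence the same persistence diagram, so $\mcal{G}_{RK}(v)=\mcal{G}_K(Rv)$.

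For the second claim I would argue in the same way, with translation by $w$ in place of $R^{-1}$: it is a homeomorphism compatible with the filtration inclusions (for $r_1\le r_2$ one has $r_1-v\cdot w\le r_2-v\cdot w$), so $\bigl(H_*((K+w)_{v,r})\bigr)_{r\in\R}\cong\bigl(H_*(K_{v,\,r-v\cdot w})\bigr)_{r\in\R}$. Since $H_*(K_{v,s};\Z_p)$ decomposes over $s$ into interval summands, this reindexed module decomposes over $r$ into the corresponding summands with every interval translated by $+v\cdot w$ — a bar alive for $s\in[b,d)$ is alive for $r\in[b+v\cdot w,\,d+v\cdot w)$, and similarly for the other endpoint types. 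Hence if $\mcal{G}_K(v)=\{(b_i,d_i)\}_{i\in I}$, then $\mcal{G}_{K+w}(v)=\{(b_i+v\cdot w,\,d_i+v\cdot w)\}_{i\in I}$.

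The hard part is not any single deep step but the bookkeeping: getting the direction of the orthogonal map right (it is precisely the convention $RK=\{R^{-1}x\mid x\in K\}$ together with $R^{-1}=R^{\mathsf T}$ that makes $Rv$, rather than $R^{-1}v$, appear, consistently with Proposition \ref{prop: ECT equiv}), and verifying carefully that applying one fixed homeomorphism to every stage of a filtration genuinely commutes with the inclusion maps, so that functoriality of $H_*$ yields an isomorphism of persistence modules and not merely a stagewise family of isomorphisms. The only other point that merits being spelled out is that a monotone reindexing of the filtration parameter shifts the birth and death value of each bar in the barcode by the same constant; this is immediate from the definition of the interval decomposition, but it is exactly the step that produces the $+v\cdot w$ translation in the diagram.
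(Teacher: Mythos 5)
Your proposal is correct and follows essentially the same route as the paper: both proofs reduce to the set identities $(RK)_{v,r}=R^{-1}(K_{Rv,r})$ and $(K+w)_{v,r}=K_{v,r-v\cdot w}+w$ and then observe that a fixed homeomorphism applied to every stage of a filtration preserves the persistence diagram, while the monotone reindexing $r\mapsto r-v\cdot w$ translates it. You merely spell out the persistence-module isomorphism step that the paper asserts in one line.
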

As in the case of the Euler curve transform, the operator $\mcal{G}_*$ exhibits $\on{O}(n)$-equivariance, and the entries of the persistence diagram are translated when the embedded simplicial complex is translated.

There is a similar result on the persistence landscape.
\begin{proposition}\label{prop: landscape o3 and trans}
    Let $K$ be an embedded simplicial complex in $\R^n$ and $v$ be a unit vector.
    Then, for $R\in\on{O}(n)$, we have
    $$\lambda_{v, c}^{RK} = \lambda_{Rv, c}^{K}$$
    for every $c\in\N$.

    Furthermore, for $w\in \R^n$, we have
    $$
    \lambda_{v, c}^{K+w}(t) = \lambda_{v,c}^K(t-v\cdot w)
    $$
    for every $c\in \N$.
\end{proposition}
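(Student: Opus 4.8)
The plan is to deduce both statements from Proposition \ref{prop: diagram o3 and trans} by exploiting that the persistence landscape is completely determined by the persistence module. Indeed, by Definition \ref{def: PL} the functions $\lambda_{v,c}^{K}$ depend only on the family of ranks $\beta^{a,b}=\dim\on{Im}\bigl(H_*(K_{v,a};\Z_p)\to H_*(K_{v,b};\Z_p)\bigr)$; since $K$ is a finite embedded simplicial complex each $K_{v,r}$ is a finite complex, so these dimensions are finite and everything is well defined. Consequently, an isomorphism of the relevant persistence modules (or a reindexing of the filtration) propagates to a corresponding relation between the landscapes, and it suffices to track how the filtration $(K_{v,r})_r$ changes under $K\mapsto RK$ and $K\mapsto K+w$.

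For the orthogonal part I would first record the set-level identity $(RK)_{v,r}=\{R^{-1}x:x\in K_{Rv,r}\}$: writing a point of $RK$ as $R^{-1}x$ with $x\in K$ (the paper's convention $RK=\{R^{-1}x\mid x\in K\}$), orthogonality of $R$ turns the half-space condition $R^{-1}x\cdot v\le r$ into $x\cdot(Rv)\le r$. Since $x\mapsto R^{-1}x$ is a homeomorphism of $\R^n$ and this identity is compatible with the filtration inclusions as $r$ increases, the filtrations $(\,(RK)_{v,r}\,)_{r}$ and $(K_{Rv,r})_{r}$ have naturally isomorphic persistence modules; in particular $\beta^{a,b}$ agrees for all $a,b$, so $\lambda_{v,c}^{RK}=\lambda_{Rv,c}^{K}$ for every $c$. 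Equivalently, this is immediate from the first identity in Proposition \ref{prop: diagram o3 and trans}, namely $B_{RK}^{v}=B_{K}^{Rv}$, together with the fact recalled above that the landscape is a function of that data.

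For the translation part I would compute that $(K+w)_{v,r}=\{x\in\R^n:x\cdot v\le r\}\cap(K+w)$, which, writing points of $K+w$ as $y+w$ with $y\in K$, equals $\{\,y+w:y\in K,\ y\cdot v\le r-v\cdot w\,\}=K_{v,\,r-v\cdot w}+w$. Translation by $w$ is a homeomorphism compatible with the inclusions, so, writing $s:=v\cdot w$, the persistence module of $K+w$ in direction $v$ at parameter $r$ is isomorphic to that of $K$ in direction $v$ at parameter $r-s$; in the notation of Definition \ref{def: PL} this reads $\beta^{a,b}_{K+w,v}=\beta^{a-s,\,b-s}_{K,v}$. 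Substituting this into $\lambda_{v,c}^{K+w}(t)=\sup\{m\ge 0:\beta^{t-m,\,t+m}_{K+w,v}\ge c\}$ turns the defining condition into $\beta^{(t-s)-m,\,(t-s)+m}_{K,v}\ge c$, so the supremum over $m$ is exactly $\lambda_{v,c}^{K}(t-s)$, which is the claim. (Alternatively one starts from the second identity in Proposition \ref{prop: diagram o3 and trans}, which shifts every diagram point by $s$ in both coordinates, and traces this shift through the tent-function description (\ref{eq: pl_different rep}) of the landscape.) The computations are routine; the one point that must be gotten right is the sign of the shift — the substitution $x=y+w$ sends $x\cdot v\le r$ to $y\cdot v\le r-v\cdot w$, so the filtration value, and hence the landscape argument, moves by $-v\cdot w$ — whereas the orthogonal case carries no such bookkeeping and follows directly from Proposition \ref{prop: diagram o3 and trans}.
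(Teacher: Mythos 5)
Your proof is correct and follows essentially the same route as the paper's: both establish the set-level identities $(RK)_{v,r}=R(K_{Rv,r})$ and $(K+w)_{v,r}=K_{v,\,r-v\cdot w}+w$, observe that the homeomorphisms $R^{-1}$ and $(\cdot)+w$ are compatible with the filtration inclusions so the ranks $\beta^{a,b}$ are preserved (up to the shift by $v\cdot w$ in the second case), and substitute into the supremum in Definition \ref{def: PL}. The sign bookkeeping for the translation shift also matches the paper's computation.
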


Note that in some cases, the persistence diagram may contain entries with an infinite lifetime.
This occurs when a topological feature, such as a connected component or a hole, exists throughout the entire parameter range being considered.
In the case of Vietoris-Rips complexes, which are commonly used in topological data analysis literature, there is only one entry with infinite persistence in the zeroth homology, corresponding to the overall connectedness of the complex.
However, in our approach, we are not dealing with Vietoris-Rips complexes, which can lead to multiple entries with infinite persistence in the persistence diagram.
These features can theoretically and computationally complicate the analysis, so it is desirable to eliminate them.
To remove the entries with infinite persistence and preserve the information, we can simply use a bijective function from $\overline{\R}$ to a finite interval.
For example, for a persistence diagram $\{(b_i, d_i)\}_{i\in I}$, we can generate a new persistence diagram $\{(b_i^\prime, d_i^\prime)\}_{i\in I}$ such that $b_i^\prime = h(b_i)$ and $d_i^\prime = h(d_i)$ where
$$
h(x)= \begin{cases}a_0 & \text { if } x = \infty, \\ -a_0 & \text { if } x = -\infty, \\ a_0\tanh(a_1x) & \text { otherwise }\end{cases}
$$
for scaling constants $0< a_0, a_1<\infty$. 
Since the function is strictly increasing and the range of the function is finite, we have $b_i^\prime \leq d_i^\prime$ and the corresponding persistence landscape is also well defined via Equation \eqref{eq: pl_different rep}.
For the convenience of notation, we denote the transform from $\{(b_i, d_i)\}_{i\in I}$ to $\{(b_i^\prime, d_i^\prime)\}_{i\in I}$ as $\mcal{H}$, that is, $\mcal{H}(\{(b_i, d_i)\}_{i\in I}) = \{(b_i^\prime, d_i^\prime)\}_{i\in I}$.

As stated in \citet{turner2014persistent} and \citet{ghrist2018persistent}, the persistent homology transform is a sufficient statistic.
Since we can recover the original persistent diagram $\mcal{G}_K(v)$ from $\mcal{H}(\mcal{G}_K(v))$ for an embedded simplicial complex $K$ and for each $v\in S^{n-1}$, $\mcal{H}\circ\mcal{G}_*$ is also a sufficient statistic.
Note that there are only finitely many simplices in an embedded simplicial complex $K$, and therefore the number of entries in $\mcal{H}(\mcal{G}_K(v))$ is finite for every $v\in S^{n-1}$.
Therefore, if we denote the persistence landscape corresponding $\mcal{H}(\mcal{G}_K(v))$ via Equation \eqref{eq: pl_different rep} as $\{\lambda_{v, c}^{\prime K}\}_{c\in\N}$,
the function
\begin{align*}
    S^{n-1}&\longrightarrow \mcal{S}\\
    v&\longmapsto \{\lambda_{v, c}^{\prime K}\}_{c\in\N}
\end{align*}
is also a sufficient statistic where $\mcal{S}$ is the set of persistence landscapes since the conversion from persistence diagrams to persistence landscapes is injective.

Although the persistent homology transform is a theoretically elegant tool for analyzing topological features, it is not the primary focus of our paper.
Instead, we primarily use the Euler curve transform to analyze the topological properties of simplicial complexes in an isometry- and subdivision-invariant manner.
While we acknowledge the potential limitations of the persistent homology transform, we will briefly discuss its use and potential drawbacks in Section \ref{sec: discussion} of our paper.

\subsection{O(3)-Equivariant Graph Neural Networks}
\label{subsec: o3 EQUIV GNN}
A variety of methods have been proposed for constructing operators that are either rotation-invariant or rotation-equivariant for functions defined on the sphere $S^2$.
Some of these methods, such as those introduced in \citet{cohen2018spherical, esteves2018learning}, use correlation and convolution operators on $S^2$ or the rotation group $\on{SO}(3)$, achieved through the use of spherical Fourier transform.
Other approaches, such as those proposed in \citet{defferrarddeepsphere, de2021gauge, yang2020rotation}, are based on graph neural networks.
Furthermore, in \citet{cohen2019gauge}, there was a suggestion to use an isotropic filter to create rotation-equivariant convolutional neural networks.
In this subsection, we present our approach, which is a $\on{O}(3)$-equivariant graph neural network.
Additionally, we provide a theoretical basis for our approach. 
Before proceeding, we introduce a notation that will be used throughout this paper.

Suppose $f$ is a function from the unit sphere $S^{n-1}$, and $R$ is an orthogonal matrix in $\on{O}(n)$. Then, the function obtained by applying the matrix $R$ to the input of $f$, that is, the function $x \mapsto f(Rx)$, is denoted as $Rf$.
This notation will be frequently used to describe equivariant or invariant properties of various functions and operators.

Now, we explain our graph neural networks.
Let us suppose that we are given a function $f$ that maps from the 2-dimensional sphere $S^2$ to a $k$-dimensional vector space over the real numbers.
To construct graph neural networks from this function, we start by uniformly sampling a set of $n$ points $\{x_1, x_2, \dots, x_n\}$ on $S^2$.
Next, we connect pairs of points with an edge if their Euclidean distance is less than a predetermined threshold.
This results in a graph where each node is associated with a feature vector that corresponds to the value of the function $f$ at the corresponding point on the sphere.
With this graph in hand, we can use graph convolutional networks to process the data and extract useful information from the underlying function.
As a basic graph neural network architecture, we can implement a graph convolutional network as follows:
\begin{equation}\label{eq: GCN basic}
\mcal{T}(f)(v) = f(v)\cdot W_0 + \sum_{u\in \mcal{N}(v)}f(u)\cdot W_1
\end{equation}
where $\mcal{T}(f)$ is the output of the GCN layer, $W_0, W_1$ are parameter matrices, and $\mcal{N}$ denotes the neighborhood of $v$.

More generally, we can consider Message Passing Neural Network (MPNN).
MPNN is a type of graph neural network that operates on graph-structured data \citep{gilmer2017neural}.
MPNN uses a message-passing mechanism to update the node representations based on the representations of their neighboring nodes and edge features.
In each iteration of the message passing, the node representations are combined with the representations of their neighbors to produce new representations that capture the information in the graph.
The formulation of MPNN is as follows:
assume that we have an undirected graph $G$ with node features $f(x_i)$ on the node $x_i$ and edge features $e_{ij}$ between $x_i$ and $x_j$.
Then, the new features $\mcal{T}(f)(x_i)$ are calculated by 
\begin{equation}\label{eq: MPNN}
\mcal{T}(f)(x_i) = U\left(f(x_i), \sum_{x_j\in \mcal{N}(x_i)} M(f(x_i), f(x_j), e_{ij})\right)
\end{equation}
where $U$ is a vertex update function and $M$ is a message function.

For our $\on{O}(3)$-equivariance architecture, let us consider the complete graph constructed from $\{x_1, x_2, \dots, x_n\}$ and set the edge features $e_{ij}$ to the distance between the nodes $x_i$ and $x_j$.
Also, we define the functions $M(x, y, z) := g_\theta(z)\cdot y $ and $U(x, y) := \frac{1}{n}(g_\theta(0)\cdot x + y)$, where $g_\theta$ is a function that can be trained, mapping from $\R$ to the space of $k\times \ell$ matrices. 
Then, Equation \eqref{eq: MPNN} become
\begin{equation}\label{eq: MPNN ours}
\mcal{T}(f)(x_i) = 
\frac{1}{n}\left(
    g_\theta(0)\cdot f(x_i)  + \sum_{x_j\in \mcal{N}(x_i)}g_\theta(e_{ij})\cdot f(x_j)
    \right) = \frac{1}{n}\sum_j g_\theta (e_{ij})f(x_j).
\end{equation}

Also, if we define a $n\times n$ matrix $(G)_{ij} := \frac{1}{n}g_\theta(e_{ij})$, Equation \eqref{eq: MPNN ours} can be represented by 
$$
\mcal{T}(f) = Gf.
$$
We present Equation \eqref{eq: MPNN ours} as our $\on{O}(3)$-equivariant operator, and we provide theoretical justification for its $\on{O}(3)$-equivalence as follows.

Let $g$ be a function from $S^2$ such that $g(x) = g_\theta(\|x\|)$.
Using $g$, we can extend the function $\mcal{T}(f)$ to the domain $S^2$:
$$
\mcal{T}(f)^\prime(x) = \frac{1}{n}\sum_j g(x - x_j)f(x_j).
$$
Of course, we have $\mcal{T}(f)(x_i) = \mcal{T}(f)^\prime(x_i)$ for every $i = 1, 2,\dots, n$.
The following theorem shows a theoretical background of our graph neural network.
\begin{theorem}\label{thm: GNN equiv}
    Let $f$ and $g$ be bounded functions on $S^2$ with respect to $\|\cdot \|_\infty$.
    Assume that $x_1, x_2, \dots, x_n$ are independent identically distributed random variables on $S^2$.
    Then, for $R\in \on{O}(3)$ and $\epsilon > 0$, 
    $$\mbb{P}\left[
        \| R\mcal{T}(f)^\prime(x) - \mcal{T}(Rf)^\prime(x)\|_\infty > \epsilon
    \right] \lrr 0
    $$
    for every $x\in S^2$ as $n$ goes to infinity.
\end{theorem}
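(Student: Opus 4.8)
The plan is to recognize both $R\mcal{T}(f)^\prime(x)$ and $\mcal{T}(Rf)^\prime(x)$ as empirical averages over the sample $x_1,\dots,x_n$, to identify their common limit via the law of large numbers, and to show that the two limits coincide using the rotation-invariance of the sampling distribution together with the radiality of $g$. Here I use — as in the construction preceding the statement, where the points are \emph{uniformly} sampled — that the common law $\mu$ of the $x_i$ on $S^2$ is $\on{O}(3)$-invariant; this invariance is exactly what makes the argument work. Also, $g$ is to be read as extended radially to $\R^3$ by $g(z)=g_\theta(\|z\|)$, so that $g(Rz)=g(z)$ for every $R\in\on{O}(3)$.

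First I would unwind the definitions. Since $R\mcal{T}(f)^\prime(x)=\mcal{T}(f)^\prime(Rx)$ and $Rf$ denotes $y\mapsto f(Ry)$, one gets
$$
R\mcal{T}(f)^\prime(x)=\frac{1}{n}\sum_{j=1}^{n} g(Rx - x_j)\,f(x_j)=:A_n, \qquad \mcal{T}(Rf)^\prime(x)=\frac{1}{n}\sum_{j=1}^{n} g(x - x_j)\,f(Rx_j)=:B_n,
$$
so the quantity to control is precisely $A_n-B_n$. For fixed $x$ and $R$, the summands defining $A_n$ are i.i.d.\ (being functions of the i.i.d.\ variables $x_j$) and bounded in each coordinate of the finite-dimensional output space, since $f$ and $g$ are bounded with respect to $\|\cdot\|_\infty$; the same holds for $B_n$. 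Applying the law of large numbers coordinatewise — boundedness gives finite variance, so Chebyshev's inequality even yields a quantitative rate — I obtain $A_n\to\alpha:=\mbb{E}[\,g(Rx-X)\,f(X)\,]$ and $B_n\to\beta:=\mbb{E}[\,g(x-X)\,f(RX)\,]$ in probability, where $X\sim\mu$.

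The crux is the identity $\alpha=\beta$. Put $Y=RX$; since $\mu$ is $\on{O}(3)$-invariant, $Y$ again has law $\mu$, and $X=R^{-1}Y$. Radiality of $g$ gives $g(x-R^{-1}Y)=g_\theta(\|x-R^{-1}Y\|)=g_\theta(\|Rx-Y\|)=g(Rx-Y)$, whence
$$
\beta=\mbb{E}\big[\,g(x-R^{-1}Y)\,f(Y)\,\big]=\mbb{E}\big[\,g(Rx-Y)\,f(Y)\,\big]=\alpha.
$$
Consequently, for every $\epsilon>0$, a union bound gives
$$
\mbb{P}\big[\,\|A_n-B_n\|_\infty>\epsilon\,\big]\ \leq\ \mbb{P}\big[\,\|A_n-\alpha\|_\infty>\tfrac{\epsilon}{2}\,\big]+\mbb{P}\big[\,\|B_n-\beta\|_\infty>\tfrac{\epsilon}{2}\,\big]\ \lrr\ 0
$$
as $n\to\infty$, which is the assertion.

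The main obstacle is conceptual rather than technical: the argument genuinely needs the sampling measure to be rotation-invariant, since that is what makes $RX$ and $X$ equidistributed in the change of variables above; for a general i.i.d.\ law on $S^2$ the limits $\alpha$ and $\beta$ need not agree, so this hypothesis (implicit in the "uniform sampling" of the construction) should be stated. Everything else is routine — boundedness (and the mild measurability needed to talk about expectations) to license the law of large numbers, and the fact that the conclusion is pointwise in $x$, so no uniform-in-$x$ (Glivenko--Cantelli type) control is required.
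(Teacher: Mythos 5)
Your proof is correct and follows essentially the same route as the paper's: the paper packages your common limit $\alpha=\beta$ as the convolution $(f\diamond g)(x)=\int_{S^2}g(x-y)f(y)\,dV(y)$, proves the identity $\alpha=\beta$ as an $\on{O}(3)$-equivariance lemma by the same change of variables, and then concludes by concentration (Hoeffding rather than Chebyshev) plus the same triangle-inequality/union-bound split. Your remark that the common law of the $x_j$ must be the uniform (rotation-invariant) measure is well taken --- the paper's proof also silently uses this when it identifies $\mbb{E}_y[g(x-y)f(y)]$ with $\int_{S^2}g(x-y)f(y)\,dV(y)$.
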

Thus, if we construct a graph neural network as described previously by sampling points uniformly over the sphere, we will be approximating an $\on{O}(3)$-equivariant operator.
Moreover, when we increase the number of points we sample, the ``equivariance error'' reduces, leading to a better and more precise representation of the input data.

There are other approaches using graph neural networks to achieve rotational equivariance. In \citet{defferrarddeepsphere} and \citet{khasanova2017graph}, the following convolution is used: for $f:S^2\rightarrow \mbb{R}$,
$$\mcal{T}(f)=\left(\sum_{k=0}^P \alpha_k L^k\right) (f)$$
where $L$ is a weighted Laplacian matrix and the weight of an edge is a function of the length of the edge.
Therefore, $L$ is a fixed specific form of the matrix $G$.
However, as stated in \citet{defferrarddeepsphere}, the equivariance error varies depending on $G$.

Practically, we do not need to construct the complete graph. 
Since Equation \eqref{eq: GCN basic} is a specific form of Equation \eqref{eq: MPNN ours}, we can just use the basic graph convolutional networks:
for a predetermined threshold $r>0$, if we set 
$$
g_\theta(x)= \begin{cases}nW_0 & \text { if } x = 0, \\ nW_1 & \text { if } 0<x<r, \\ 0 & \text { otherwise, }\end{cases}
$$
then we induce Equation \eqref{eq: GCN basic} from Equation \eqref{eq: MPNN ours}.
We believe that the small support of $g_\theta$ corresponds to the small kernel size in the $2$-dimensional convolution operator.

By adding a global max or average pooling layer after our graph neural network, we can obtain a representation of the underlying graph.
It should be noted that such pooling layers are capable of representing an $\on{O}(3)$-invariant operator since the maximum or average does not change under $\on{O}(3)$-transformations.
If we stack our graph neural networks and pooling layer together, according to Theorem \ref{thm: GNN equiv}, the network possesses an $\on{O}(3)$-invariant property.
This means that using our proposed graph neural network and a pooling layer allows us to obtain an $\on{O}(3)$-invariant representation of an embedded simplicial complex.

\subsection{Isometry-Invariant and Subdivision-Invariant Operator}
This section presents the development of isometry-invariant and subdivision-invariant operators by leveraging the Euler curve transform and an $\on{O}(3)$-invariant operator.
A brief overview of their implementation will also be provided.

First, the Euler curve transform can be considered as a function
\begin{alignat*}{2}
    \mcal{F}_K:S^{n-1}&\longrightarrow \on{Map}(\R\rr\R) \\
    v&\longmapsto  \quad(\mcal{F}_K(v):&\mbb{R} &\longrightarrow\mbb{R})\\
    &&r&\longmapsto\chi(K_{v, r})
\end{alignat*}
where $\on{Map}(A\rr B)$ denotes the set of functions from $A$ to $B$.
To make a translation-invariant operator, we use translation-invariant functionals on the set $\on{Map}(\R\rr\R)$:
\begin{proposition}\label{prop: inv and equiv}
Assume that there are translation-invariant functionals $\{\mcal{D}_i\}_{i=1}^m$ on the set $\on{Map}(\R\rr\R)$, that is, if there exist $t\in \mbb{R}$ such that $f(x) = g(x + t)$ for every $x$, then $\mcal{D}_if = \mcal{D}_ig \in \mbb{R}$ for every $1\leq i\leq m$.
Let $\mcal{DF}_K$ be a function 
\begin{alignat*}{2}
\mcal{DF}_K:S^{n-1}&\longrightarrow \mbb{R}^m\\
v&\longmapsto \{\mcal{D}_1\circ\mcal{F}_K(v), \dots, \mcal{D}_m\circ\mcal{F}_K(v)\}.
\end{alignat*}
Then, $\mcal{DF}_*$ is $\on{O}(n)$-equivariant, subdivision-invariant, and translation-invariant on the set of embedded simplicial complexes.
\end{proposition}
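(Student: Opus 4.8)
The plan is to verify each of the three invariance/equivariance properties separately, reducing each one to the corresponding property of the underlying Euler curve transform $\mcal{F}_*$ that was already established, and then applying the translation-invariant functionals $\mcal{D}_i$ as a "post-processing" step that either preserves or kills the relevant transformation.

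First I would handle subdivision-invariance, which is essentially immediate: if $K'$ is a subdivision of $K$, then $K' = K$ as subsets of the ambient space, so $1_{K'} = 1_K$, hence $K'_{v,r} = K_{v,r}$ and $\mcal{F}_{K'}(v) = \mcal{F}_K(v)$ for all $v$; applying each $\mcal{D}_i$ to the same function gives the same output, so $\mcal{DF}_{K'} = \mcal{DF}_K$ as functions on $S^{n-1}$. Next I would do $\on{O}(n)$-equivariance. By Proposition \ref{prop: ECT equiv} with $w = 0$, we have $\mcal{F}_{RK}(v,r) = \mcal{F}_K(Rv, r)$, i.e. as functions of the second variable, $\mcal{F}_{RK}(v) = \mcal{F}_K(Rv)$. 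Therefore $\mcal{D}_i \circ \mcal{F}_{RK}(v) = \mcal{D}_i \circ \mcal{F}_K(Rv)$, so $\mcal{DF}_{RK}(v) = \mcal{DF}_K(Rv)$; in the notation of Section \ref{subsec: o3 EQUIV GNN} this reads $\mcal{DF}_{RK} = R(\mcal{DF}_K)$, which is exactly $\on{O}(n)$-equivariance. Note this step uses nothing about translation-invariance of the $\mcal{D}_i$ — only that $\mcal{F}_*$ is $\on{O}(n)$-equivariant.

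The one step where the hypothesis on the functionals is actually used is translation-invariance. Here I would invoke Proposition \ref{prop: ECT equiv} with $R = I$: $\mcal{F}_{K+w}(v,r) = \mcal{F}_K(v, r - v\cdot w)$. Fixing $v$ and writing $t := v\cdot w \in \R$, this says the function $\mcal{F}_{K+w}(v)$ is the function $\mcal{F}_K(v)$ shifted by $t$, i.e. $\mcal{F}_{K+w}(v)(r) = \mcal{F}_K(v)(r - t)$. By the defining property of a translation-invariant functional $\mcal{D}_i$ (applied with $f = \mcal{F}_{K+w}(v)$ and $g = \mcal{F}_K(v)$, which satisfy $f(r) = g(r-t)$, equivalently $f(r) = g(r + (-t))$ after relabeling), we get $\mcal{D}_i \circ \mcal{F}_{K+w}(v) = \mcal{D}_i \circ \mcal{F}_K(v)$ for every $i$, hence $\mcal{DF}_{K+w}(v) = \mcal{DF}_K(v)$ for every $v$, i.e. $\mcal{DF}_{K+w} = \mcal{DF}_K$.

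I do not anticipate a genuine obstacle here; the proof is a routine unwinding of definitions. The only mild care needed is bookkeeping the sign convention in the statement of translation-invariance (the definition is phrased as "$f(x) = g(x+t)$ implies $\mcal{D}_i f = \mcal{D}_i g$") against the $r - v\cdot w$ appearing in Proposition \ref{prop: ECT equiv}, and making sure the $\on{O}(n)$-equivariance conclusion is stated with the correct composition order using the $Rf$ notation introduced earlier. One could also remark that, since $\mcal{F}_{K+w}(v)$ differs from $\mcal{F}_K(v)$ only by a translation of the argument, one does not even need $\mcal{F}_K$ to be injective or any additional regularity — only the equivariance formula of Proposition \ref{prop: ECT equiv} and the stated hypothesis on the $\mcal{D}_i$.
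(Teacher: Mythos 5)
Your proof is correct and follows essentially the same route as the paper's: both reduce everything to Proposition \ref{prop: ECT equiv} plus the translation-invariance hypothesis on the $\mcal{D}_i$, the only cosmetic difference being that the paper treats the combined transformation $RK+w$ in one pass while you verify the $R$ and $w$ cases separately. Your sign bookkeeping for the shift $t=-v\cdot w$ is also handled correctly.
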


There is a similar result for the persistence landscape:
\begin{proposition}\label{prop: inv and EQUIV PL version}
Assume that there are functionals $\{\mcal{E}_i\}_{i=1}^m$ on the set $\on{Map}(\R\times \N \rr\overline{\R})$ such that if there exist $t\in \mbb{R}$ such that $f(x, c) = g(x + t, c)$ for every $x$ and $c$, then $\mcal{E}_if = \mcal{E}_ig \in \mbb{R}$ for every $1\leq i\leq m$.
Let $\mcal{E}\lambda_K$ be a function 
\begin{align*}
\mcal{E}\lambda_K:S^{n-1}&\longrightarrow \mbb{R}^m\\
v&\longmapsto \{\mcal{E}_1\circ\lambda_v^K, \dots, \mcal{E}_m\circ\lambda_v^K\}
\end{align*}
where $\lambda_v^K(x, c) := \lambda_{v,c}^K(x)$.
Then, $\mcal{E}\lambda_*$ is $\on{O}(n)$-equivariant, subdivision-invariant, and also translation-invariant on the set of embedded simplicial complexes.
\end{proposition}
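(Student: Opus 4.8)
The plan is to mirror the proof of Proposition \ref{prop: inv and equiv} verbatim, with the single change that the role played there by Proposition \ref{prop: ECT equiv} (which records how $\mcal{F}_K$ transforms under $\on{O}(n)$ and under translations) is now played by Proposition \ref{prop: landscape o3 and trans}, the analogous transformation law for the persistence landscapes $\lambda_{v,c}^K$. I would treat the three assertions — $\on{O}(n)$-equivariance, translation-invariance, subdivision-invariance — separately, each time reducing the claim to an identity between the two-variable functions $\lambda_v^K : \R\times\N \rr \overline{\R}$ and then applying the hypothesis on the functionals $\mcal{E}_i$.

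For $\on{O}(n)$-equivariance I would fix $R\in\on{O}(n)$ and $v\in S^{n-1}$, and invoke the first half of Proposition \ref{prop: landscape o3 and trans}: $\lambda_{v,c}^{RK}=\lambda_{Rv,c}^K$ for every $c\in\N$. Unwinding $\lambda_v^K(x,c)=\lambda_{v,c}^K(x)$, this says $\lambda_v^{RK}=\lambda_{Rv}^K$ as functions on $\R\times\N$, so $\mcal{E}_i\circ\lambda_v^{RK}=\mcal{E}_i\circ\lambda_{Rv}^K$ for each $i$, i.e. $\mcal{E}\lambda_{RK}(v)=\mcal{E}\lambda_K(Rv)$; in the $R$-action notation this is $\mcal{E}\lambda_{RK}=R(\mcal{E}\lambda_K)$. (Note this step uses only that $\lambda$ is $\on{O}(n)$-equivariant, not the translation-invariance of the $\mcal{E}_i$.) For translation-invariance I would fix $w\in\R^n$ and $v\in S^{n-1}$ and use the second half of Proposition \ref{prop: landscape o3 and trans}: $\lambda_{v,c}^{K+w}(t)=\lambda_{v,c}^K(t-v\cdot w)$ for every $c$. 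Setting $t_0:=-\,v\cdot w\in\R$, this reads $\lambda_v^{K+w}(x,c)=\lambda_v^K(x+t_0,c)$ for all $x,c$, which is exactly the situation in which the hypothesis on $\mcal{E}_i$ gives $\mcal{E}_i(\lambda_v^{K+w})=\mcal{E}_i(\lambda_v^K)$; hence $\mcal{E}\lambda_{K+w}(v)=\mcal{E}\lambda_K(v)$, and since $v$ is arbitrary, $\mcal{E}\lambda_{K+w}=\mcal{E}\lambda_K$.

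For subdivision-invariance, if $K'$ is a subdivision of $K$ then $K$ and $K'$ are equal as subsets of $\R^n$, so for every $v$ the filtered spaces $(K_{v,r})_{r\in\R}$ and $(K'_{v,r})_{r\in\R}$ are literally identical; therefore their persistence modules, persistence diagrams, and hence persistence landscapes coincide, so $\lambda_v^{K'}=\lambda_v^K$ and $\mcal{E}\lambda_{K'}=\mcal{E}\lambda_K$. The argument is essentially pure bookkeeping, so there is no serious obstacle; the only points needing a word of care are (i) confirming the landscapes are well defined — which the paper has already assumed when introducing $\lambda_{v,c}^K$, and which holds here since the half-space filtration of a compact embedded simplicial complex changes homotopy type at only finitely many values, making the module pointwise finite-dimensional — and (ii) matching the \emph{sign} of the shift $t_0=-\,v\cdot w$ against the precise form $f(x,c)=g(x+t,c)$ demanded in the statement.
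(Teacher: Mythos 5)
Your proposal is correct and follows essentially the same route as the paper: it reduces each claim to the transformation law in Proposition \ref{prop: landscape o3 and trans} (with the sign of the shift $t=-v\cdot w$ handled correctly against the hypothesis on the $\mcal{E}_i$) and derives subdivision-invariance from the fact that $K$ and its subdivision coincide as subsets of $\R^n$, which is exactly the paper's argument. The only difference is that you spell out the subdivision step slightly more explicitly than the paper, which simply refers back to Proposition \ref{prop: inv and equiv}.
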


As a direct result of Proposition \ref{prop: inv and equiv}, we present the following corollary:

\begin{corollary}\label{cor: our goal}
    Let $\mcal{P}$ be an $\on{O}(n)$-invariant function
    \begin{align*}
        \mcal{P}: \on{Map}(S^{n-1}\rr \R^m)\lrr \R^k, 
    \end{align*}
    that is, $\mcal{P}(Rf) = \mcal{P}(f)\in \R^k$ for $R\in \on{O}(n)$.
    Then, $\mcal{P}(\mcal{DF}_*)$ is isometry-invariant and subdivision-invariant on the set of embedded simplicial complexes.
\end{corollary}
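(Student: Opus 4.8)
The plan is to chain together three facts that are already available: the decomposition of an arbitrary isometry as an orthogonal map followed by a translation (Proposition \ref{prop: isometry-unique}), the equivariance and invariance properties of the intermediate operator $\mcal{DF}_*$ (Proposition \ref{prop: inv and equiv}), and the hypothesized $\on{O}(n)$-invariance of $\mcal{P}$. Since all the substantive work has been front-loaded into those propositions, the argument should reduce to a short verification; the only point requiring care is matching the phrase ``$\on{O}(n)$-equivariant'' for $\mcal{DF}_*$ with the convention $Rf = f(R\,\cdot\,)$ used to state $\on{O}(n)$-invariance of $\mcal{P}$.

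First I would record what Proposition \ref{prop: inv and equiv} says at the level of the maps $\mcal{DF}_K : S^{n-1}\rr\R^m$: for all $R\in\on{O}(n)$ and $w\in\R^n$ one has $\mcal{DF}_{RK+w} = R\,\mcal{DF}_K$, i.e.\ $\mcal{DF}_{RK+w}(v) = \mcal{DF}_K(Rv)$ for every $v\in S^{n-1}$, and $\mcal{DF}_{K'} = \mcal{DF}_K$ whenever $K'$ is a subdivision of $K$. (Unwound at a direction $v$, the first identity reads $\mcal{D}_i(\mcal{F}_{RK+w}(v)) = \mcal{D}_i(\mcal{F}_K(Rv))$, which follows from Proposition \ref{prop: ECT equiv} together with the translation-invariance of each $\mcal{D}_i$; the subdivision statement is immediate from subdivision-invariance of the Euler curve transform.) Subdivision-invariance of $\mcal{P}(\mcal{DF}_*)$ is then immediate: if $K'$ is a subdivision of $K$, then $\mcal{DF}_{K'} = \mcal{DF}_K$, hence $\mcal{P}(\mcal{DF}_{K'}) = \mcal{P}(\mcal{DF}_K)$.

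For isometry-invariance I would take an arbitrary isometry $\phi$ and use Proposition \ref{prop: isometry-unique} to write $\phi(x) = Ax + b$ with $A\in\on{O}(n)$, $b\in\R^n$; then the isometric image of $K$ is exactly $RK + w$ with $R = A^{-1}\in\on{O}(n)$ and $w = b$ in the paper's notation $MK = \{M^{-1}x\mid x\in K\}$, so every isometric image of $K$ has this form. Applying the equivariance from the previous paragraph and then the $\on{O}(n)$-invariance of $\mcal{P}$ yields
$$
\mcal{P}(\mcal{DF}_{RK+w}) = \mcal{P}(R\,\mcal{DF}_K) = \mcal{P}(\mcal{DF}_K),
$$
which is precisely isometry-invariance, and combined with the previous paragraph this proves the corollary.

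I do not expect a genuine obstacle; the content is entirely in Propositions \ref{prop: inv and equiv} and \ref{prop: isometry-unique}. The only thing worth double-checking is bookkeeping of conventions — that $K \mapsto RK + w$ really is a (left) action of the isometry group, so that ranging over all $R\in\on{O}(n)$ and $w\in\R^n$ genuinely exhausts the isometric images of $K$, and that no inverse sneaks in when passing from $\mcal{DF}_{RK+w} = R\,\mcal{DF}_K$ to $\mcal{P}(R\,\mcal{DF}_K) = \mcal{P}(\mcal{DF}_K)$; the latter is fine because $\mcal{P}$ is assumed invariant under the whole group $\on{O}(n)$ rather than merely under a set of generators.
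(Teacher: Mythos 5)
Your proposal is correct and follows exactly the route the paper intends: the corollary is stated as a ``direct result'' of Proposition \ref{prop: inv and equiv}, combined with the isometry decomposition of Proposition \ref{prop: isometry-unique} and the assumed $\on{O}(n)$-invariance of $\mcal{P}$, which is precisely the chain you assemble. Your bookkeeping of the paper's convention $MK=\{M^{-1}x\mid x\in K\}$ (so that the isometric image $x\mapsto Ax+b$ of $K$ is $RK+w$ with $R=A^{-1}$, $w=b$) and of $Rf=f(R\,\cdot)$ is also consistent with how Proposition \ref{prop: inv and equiv} is stated and proved.
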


The above corollary represents the desired operator that we aim to approximate in our analysis.
To achieve this goal, we employ several deep-learning architectures to approximate and implement the various components that make up this corollary.
In other words, we use deep learning techniques to create a model that can effectively approximate an isometry-invariant and subdivision-invariant operator.

To obtain translation-invariant functionals $\{\mcal{D}_i\}$, we propose using $1$-dimensional convolutional neural networks and global pooling.
We use the fact that $1$-dimensional convolutions are known to be translation-equivariant.
After applying a series of convolutions, by adding a global pooling layer at the end of the network, we can obtain a vector for sampled points on $S^2$ that is translation-invariant.
As a global pooling layer, we can use the global max pooling layer or a global average pooling layer.
This pooling layer aggregates the outputs of the convolutional layers across the entire input, resulting in a single output that is invariant to translations.
The resulting vector can be used as a representation of the Euler curve for each sampled point and can be fed into a graph neural network for further processing.
After that, we apply graph neural networks.
Section \ref{subsec: o3 EQUIV GNN} of the paper outlines how we can use our proposed graph neural network in combination with a pooling layer, such as the global max pooling or the global average pooling layer, to effectively approximate an $\on{O}(3)$-invariant operator.
We discuss the implementation in a little more detail in the following.

\begin{figure}[t]
    \centering
    \includegraphics[width=0.9\linewidth]{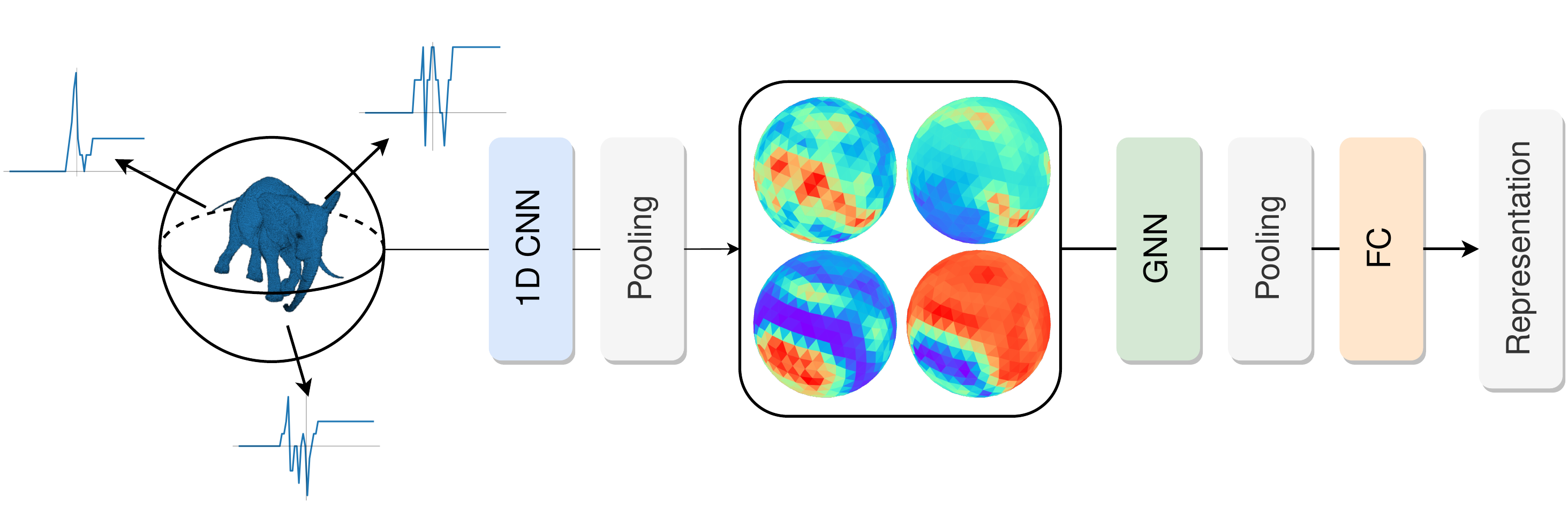}
    \caption{Illustration of a simplified structure of the proposed architecture.
    Initially, the Euler curves are obtained by pre-processing an embedded simplicial complex.
    Next, a $k$-dimensional vector is produced for sampled points on $S^2$ by using a 1D convolutional neural network and a pooling layer.
    Subsequently, we use our graph neural network over the graph of the points on the sphere.
    Finally, a representation vector of the embedded simplicial complex is obtained via a pooling layer.
    To illustrate this, we use a subdivision of the icosahedron and a mesh from the ANIM data set in this figure.
    }
    \label{fig: architecture}
\end{figure}

\subsubsection{Implementation}
Now, we explain our procedure in order briefly.
To facilitate the discussion, we illustrate a simplified structure of our architecture in Figure \ref{fig: architecture}.
To begin our procedure, we first need to sample points on $S^2$ as uniformly as possible.
A variety of methods are available to sample points fairly uniformly on $S^2$, such as using subdivisions of the icosahedron, HEALPix sampling \citep{gorski2005healpix}, Sukhrev grid sampling \citep{yershova2004deterministic}, or Fibonacci grid sampling \citep{swinbank2006fibonacci}.
The resulting set of points denoted as $\{x_1, x_2,\dots, x_n\}$, are used to create a graph $G$, which is used for the subsequent steps in our approach.

Next, we make a graph $G$ based on the sampled points.
As we discussed in Section \ref{subsec: o3 EQUIV GNN} of our paper, there are two ways to generate the graph: creating a complete graph or connecting pairs of points that are closer than a certain distance.

After creating the graph, we need to preprocess all the embedded simplicial complex data.
Specifically, we generate a discretized Euler curve for the points we previously sampled.
The discretized Euler curve is represented as a vector in $\R^t$, where $t$ is a hyperparameter that determines the resolution of the curve.
We note that this transformation produces the same curve, regardless of how an embedded simplicial complex is divided.

Once we have generated the discretized Euler curve for the sampled points, we apply multiple 1D convolutional neural networks and apply a pooling layer at the end.
This generates a translation-invariant vector for each point $x_i$.
We use these vectors as node features for the input of the next step.

Using the node features on graph $G$, we stack several of our proposed graph neural networks on it to generate a new feature for each node in the graph.
By doing this, we can represent the embedded simplicial complex more effectively.
Next, we obtain a graph representation using a global max pooling layer or a global average pooling layer.
These pooling layers aggregate the node features and produce a single vector representation for the entire graph.
This graph representation approximates an isometry-invariant and subdivision-invariant representation of the embedded simplicial complex.
This representation vector can then be used for further analysis and applications.

In Section \ref{sec: exp}, we provide a detailed description of our experiments. It includes our methodology, our data set, results, and hyperparameters.

\subsection{SO(3)-Invariant and not O(3)-Invariant Operator}
This subsection considers the topic of an operator that is $\on{SO}(3)$-invariant yet is not $\on{O}(3)$-invariant, indicating the operator's capability to distinguish inversion.
Although a significant number of studies have been conducted on $\on{SO}(3)$-invariant operators, only a few have discussed the operators that can differentiate reflection.
Here, we present an example of a particular $\on{SO}(3)$-invariant operator that can distinguish reflection.

Let $f$ and $g$ be functions from $S^2$ to $\R$ where $g$ is a learnable function.
Then, from \citet{cohen2018spherical}, define a $\on{O}(3)$-equivariant operator $f\star g$:
\begin{align*}
    f\star g:\on{SO}(3)&\longrightarrow \R\\
    x&\longmapsto \int_{S^2}g(x^{-1}y)f(y)\; dV(y).
\end{align*}
As a simple example of $\on{SO}(3)$-invariant operator, we present 
$$f\bullet g := \max\{(f\star g)(x) \mid x\in \on{SO}(3)\}$$
which can distinguish between a function and its reflection:
let $A$ be an open set of the shape ``L'' on $S^2$, and $f$ be the characteristic function of $A$.
Also, assume that $B$ is a reflection of $A$ on $S^2$ and $g$ is its characteristic function.
See Figure \ref{fig: L_sphere} for the illustration.
\begin{figure}[t]
\centering
\begin{subfigure}{.25\textwidth}
    \centering
    \includegraphics[width=\linewidth]{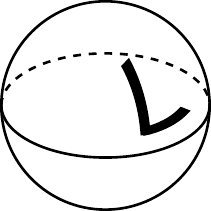}
    \caption{function $f:S^2\rightarrow \mbb{R}$}
\end{subfigure}
\hspace{1.0in}
\begin{subfigure}{.25\textwidth}
    \centering
    \includegraphics[width=\linewidth]{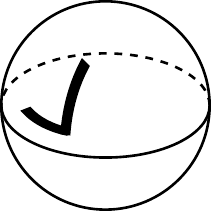}
    \caption{function $g:S^2\rightarrow \mbb{R}$}
\end{subfigure}
\caption{Examples of two functions from $S^2$ to $\mbb{R}$. The function $g$ is a reflection of the function $f$.}
\label{fig: L_sphere}
\end{figure}
Then, we have
$$f\bullet f= \operatorname{Vol}(A) = \operatorname{Vol}(B) = g\bullet g$$
and 
$$f\bullet g= \max\{\operatorname{Vol}(A\cap RB)\mid R\in \on{SO}(3)\} \neq \on{Vol}(A)$$
since the shape of $A$ and $B$ are not overlapped completely just with rotations.

Therefore, operator $f\bullet g$ is $\on{O}(3)$-invariant and capable of distinguishing reflection, which can be useful for certain applications.
This operator can be implemented by finding the function $f\star g$ through the use of the spherical Fourier transform \citep{cohen2018spherical} and selecting the maximum value over the points sampled in $\on{SO}(3)$.
In our proposed operator in Section \ref{subsec: o3 EQUIV GNN}, if we replace the final graph pooling layer with the pooling step $f\bullet g$, we can create an operator that is both invariant to $\on{SE}(3)$ and able to discriminate reflection, resulting in a more expressive model.

\section{Experiments}
\label{sec: exp}
This section covers our experimental results showing the outcomes of our experiments with synthetic mesh data.

\subsection{Data Set}
First, we describe our experiments on the ANimals in Motion (ANIM) data set.
The ANIM data set is a collection of synthetic mesh sequences obtained from \citet{sumner2004deformation}, containing eight different categories: elephant, camel, horse, flamingo, cat, face, head, and lion.
Each category contains a varying number of mesh sequences.
In our experiment, we aim to test the performance of our model in obtaining the representation of mesh data.
To show the representations, our model is trained to gather the mesh data in each class around the vertex of the regular octagon in $S^1\subset\mbb{R}^2$ (see Figure \ref{fig: ANIM high res}), and we tested its ability to be isometry-invariant through a series of transformations.

We have different numbers of objects for each class, like elephants, camels, horses, flamingos, cats, faces, heads, and lions.
For example, there are $60$ elephants, $59$ camels, and $59$ horses. 
The number of objects in the other classes is smaller, with only $10$ cats, faces, heads, and lions, and $11$ flamingos.
We also limit the number of training data to $5$ randomly chosen mesh sequences per class, resulting in a total of only $40$ meshes out of the $229$ meshes without any data augmentation
This small training set size allows us to evaluate the model's ability to learn from limited data and generalize well to new examples.

\subsection{Model Architecture}
\begin{figure}[t]
    \centering
    \includegraphics[width=0.6\linewidth]{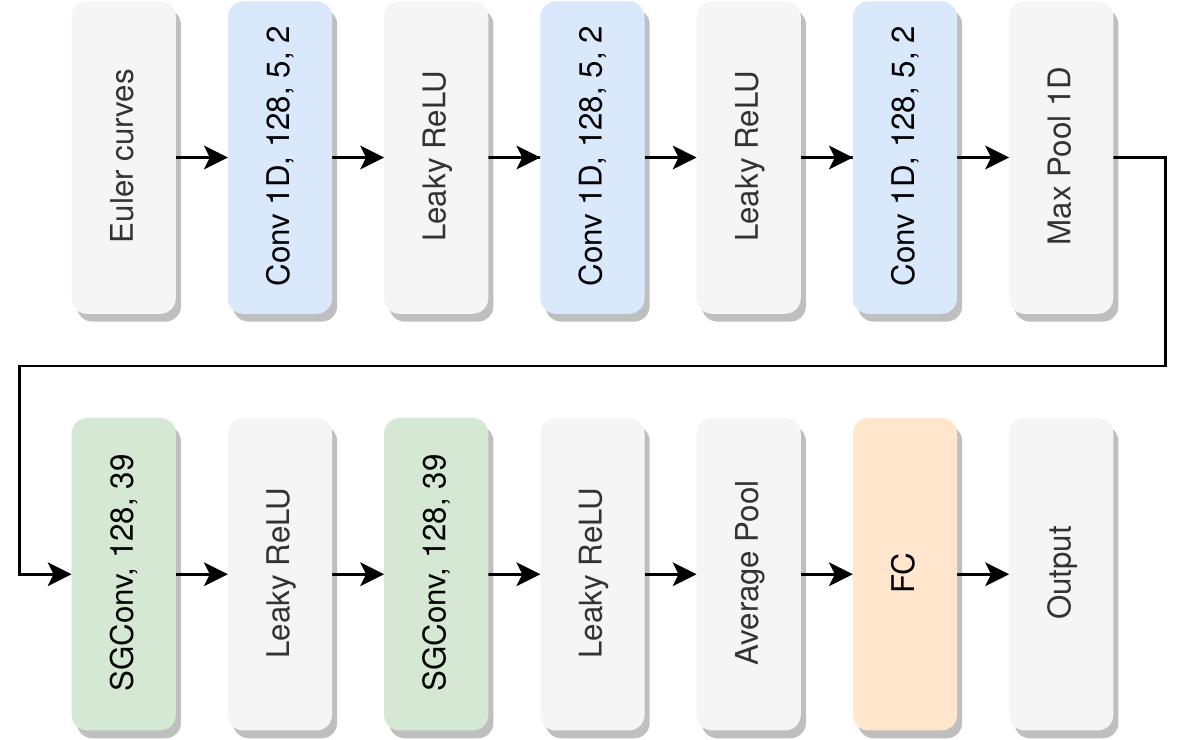}
    \caption{
        This figure shows the architecture that is used in Section \ref{sec: exp}.
        The convolutional layers are labeled Conv 1D, followed by the number of channels, kernel size, and then stride.
        For the Leaky ReLU layer, we fix the negative slope as $0.01$.
        The SGConv layers are labeled SGConv, followed by the number of channels and $k$, the hyperparameter of SGConv.
        }
    \label{fig: model detail}
\end{figure}
In this subsection, we present the model architecture that we employed for our experiment.
The initial step is to use the icosahedron's subdivision as described in \citet{cohen2019gauge,defferrarddeepsphere,de2021gauge,yang2020rotation}.
We use the term ``$\text{level}$'' of subdivision to present how subdivision was created.
For instance, $\text{level}=1$ corresponds to the icosahedron with 12 vertices, while $\text{level}=i$ represents the subdivision of the edges of the icosahedron triangle into $i$ edges.
We also provide the number of vertices corresponding to each level to avoid ambiguity.
In addition, to make a graph, we take the edges of subdivisions of the icosahedron.

We obtain translation-invariant vectors for each node by using three 1D convolutional neural networks and two Leaky ReLU layers between them and applying a maximum pooling layer at the end.
The first convolution layer changes the number of channels from $1$ to $128$, followed by the second and third convolution layers, which fix the number of channels to $128$.
We use a kernel size of $5$ and a stride of $2$ for all convolution layers.
In the end, we add the max pooling layer.

For a graph neural network, we use SGConv-Leaky ReLU-SGConv-Leaky ReLU architecture where SGConv denotes Simplifying GCN \citep{wu2019simplifying}:
let $D$ be a diagonal matrix where $D_{v, v}=\operatorname{Deg}(v)=|N(v)|$ and define $\widetilde{A}=D^{-1 / 2} A D^{-1 / 2}$ where $A$ is the adjacency matrix of the graph.
For the matrix form of all node features $X$ and a learnable matrix $W$, the transformation of Simplifying GCN is represented by the formula $\widetilde{A}^k X W$.
For our experiment, we use $k=39$ to make sure a receptive field is wide enough.
It should be noted that SGConv can be understood as a series of graph convolutions that do not involve non-linear activation functions.
We fix the number of channels to $128$.
After the last Leaky ReLU layer, we add the average pooling layer so that we have a representation of the graph.
In the end, using a fully connected layer, we decrease the dimension to $2$ to show the result.

In order to enhance the clarity of our model, we introduce Figure \ref{fig: model detail}, which is a graphic depiction of the architecture used in our experiments.

\subsection{Result}
To test the isometry-invariance of our model, we performed a series of random parallel movements, reflections, and rotations on all of the data sets, including those in the training set, and tested the model's ability to see the invariance.
Our results are presented in Figure \ref{fig: ANIM high res}, where we show the performance and invariance of our model under various transformations.

In our experiments, we observed that the mesh data of each class tend to gather together in $\R^2$ to form a well-defined cluster, indicating that the model is able to effectively learn the representations even though the number of the training set was very small.
This clustering behavior is particularly remarkable given that we did not use any data augmentation techniques to increase the size of our training set and to be robust to $\on{O}(3)$-transformation.
Furthermore, we observe that the output obtained from performing three independent random transformations on the data set exhibits a high degree of similarity.
In other words, the resulting representations are very consistent across $\on{O}(3)$-transformations.

\begin{figure}[t]
\centering
\begin{subfigure}{.32\textwidth}
    \centering
    \includegraphics[width=\linewidth]{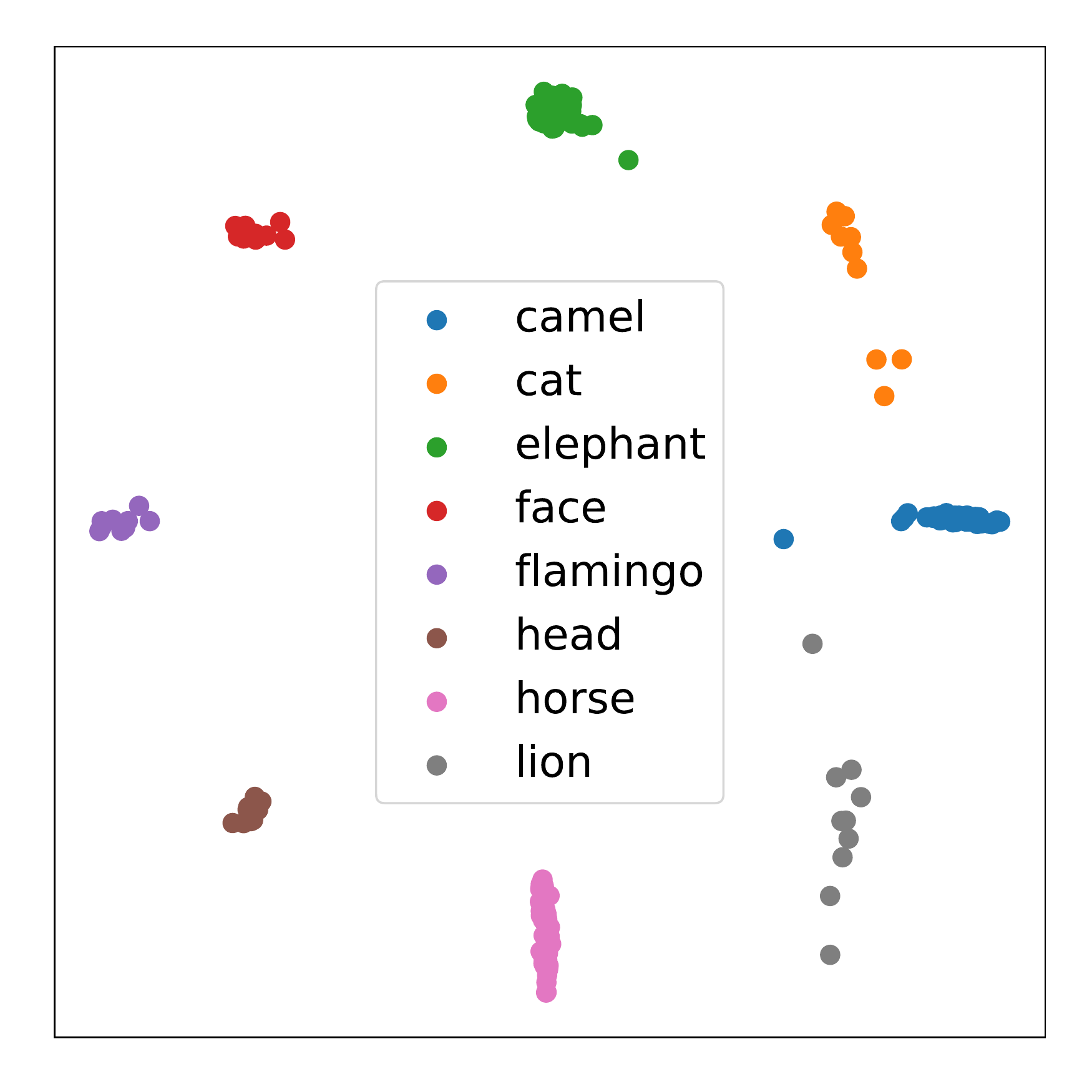}
    \caption{}
\end{subfigure}
\begin{subfigure}{.32\textwidth}
    \centering
    \includegraphics[width=\linewidth]{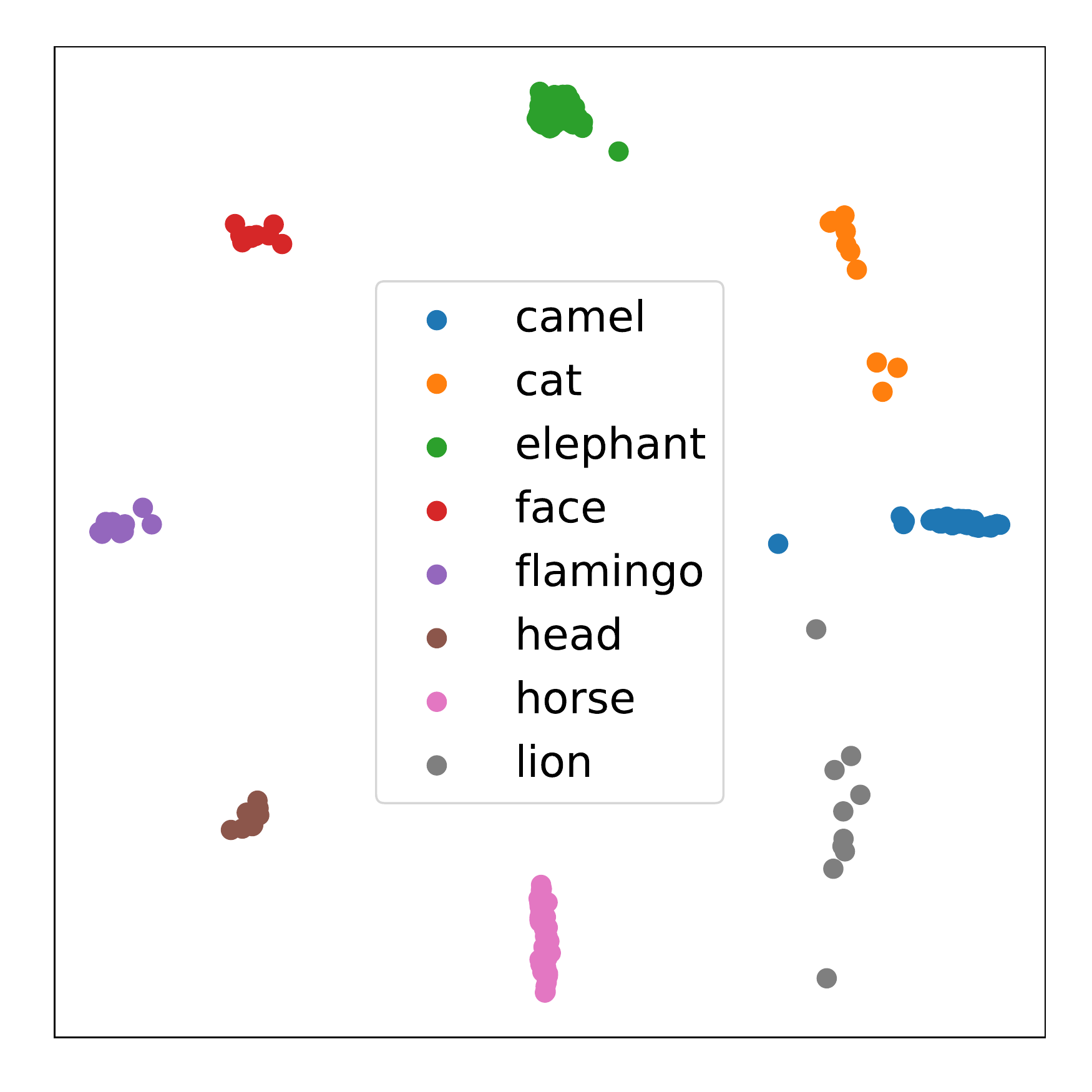}
    \caption{}
\end{subfigure}
\begin{subfigure}{.32\textwidth}
    \centering
    \includegraphics[width=\linewidth]{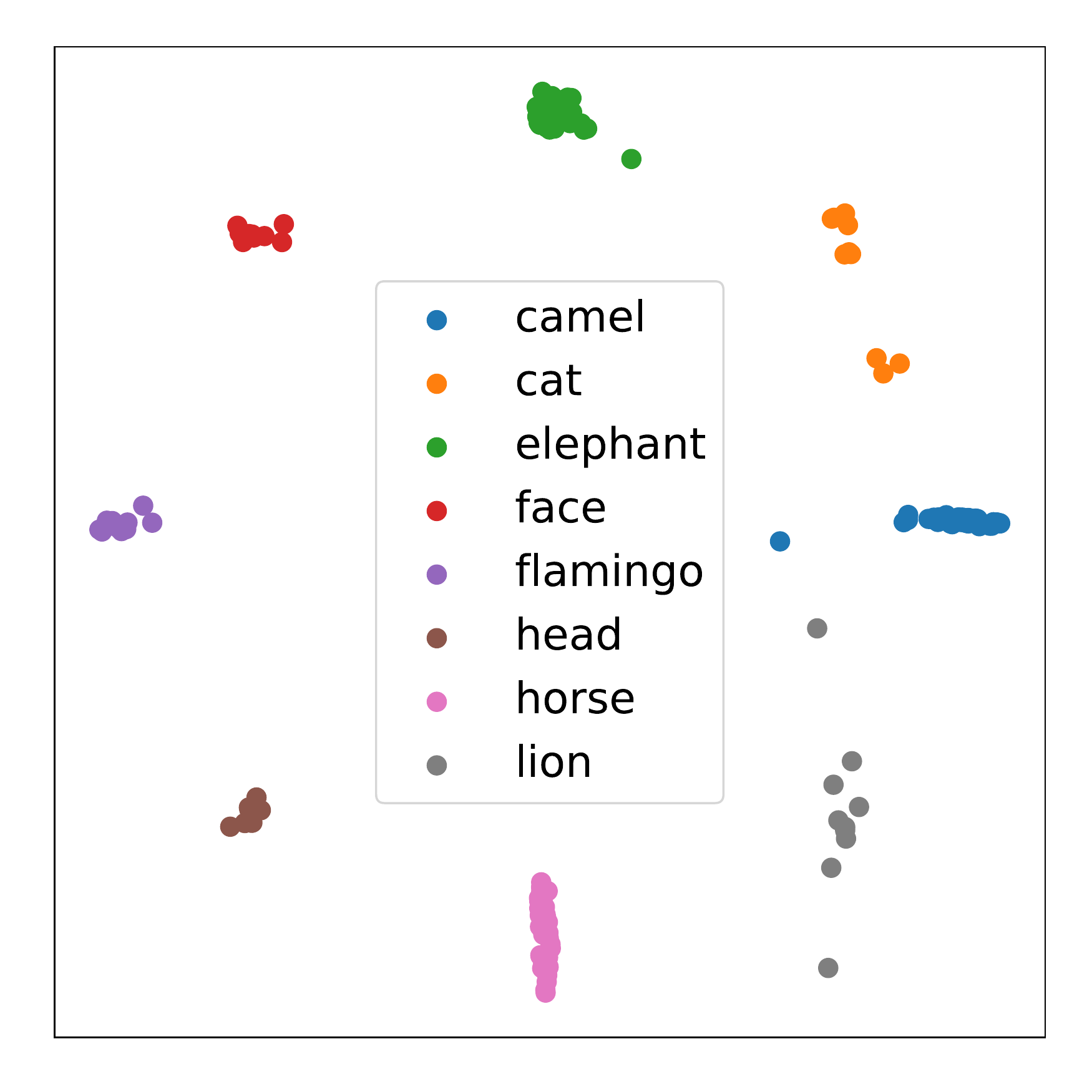}
    \caption{}
\end{subfigure}
\caption{The outcome of our model that is trained with $5$ mesh data per class. 
Each subfigure is produced after we apply random transformations to the data set.}
\label{fig: ANIM high res}
\end{figure}

\subsection{Experimental Details}
In this subsection, we provide a more detailed explanation of the methodology we use for our study.
We begin by downloading all the ANIM data and use $229$ meshes among $337$ data, excluding the horse-collapse and camel-collapse folders.
We exclude these folders because the ``collapsing'' mesh data is no longer geometrically in the same class as horse or camel\footnote{For the detailed description of the data set, refer to \url{http://people.csail.mit.edu/sumner/research/deftransfer/data.html}.}.
Since there are some duplicate vertices and faces, we eliminate them.

To sample points on $S^2$, we first subdivide the icosahedron and take its vertices.
For each sampled point, we compute a discretized Euler curve for all meshes.
To calculate the Euler curve by discretization, we first need to determine the range and resolution of the Euler curve.
We take the Euler curve from $-a$ to $a$ for a positive number $a$. 
Given a resolution value $t$, we divide the interval $(-a, a)$ into $t$ regular pieces to obtain $-a = x_0 <x_1< \dots < x_{t-1} < x_t = a$.
For each $v\in S^2$, we computed $\chi(K_{v, x_i})$ to obtain the discretized Euler curve for $i=1, 2,\dots t$.
In our experiments, the value of $a$ used is $8$.
To ensure that all the meshes fall within the range of $(-a, a)$, we divide the coordinates of the vertices of each mesh by the standard deviation of the vertex coordinates.

In Figure \ref{fig: ANIM high res} of the previous experiment, a resolution of $512$ is used for the Euler curve, along with a subdivision of the icosahedron at level $13$, resulting in $1692$ vertices.

In our experiments, we apply randomized $\on{O}(3)$-transforms by first multiplying the vertices of each mesh by a random orthogonal matrix, followed by sampling a point from the three-dimensional standard normal distribution and adding it to the vertices of the mesh.
We use SciPy \citep{2020SciPy-NMeth} library to obtain the random orthogonal matrix.

After preprocessing all mesh data into the Euler curve, we are ready to train our architecture.
We use the PyTorch \citep{paszke2019pytorch} library to create and train our model.
We create a training set using a random selection of $5$ data for each class, totaling $40$ data, out of a total of $229$ data.
No data augmentation is used in this process.
We use $400$ epochs for training, and the batch size is set to $16$.
We use SmoothL1Loss as the loss function with the hyperparameter beta set to $0.1$.
We start with a learning rate of $0.001$ and reduced it to $0.0001$ after $200$ epochs.

\subsection{Isometry-Invariance Error Analysis}
\label{subsec: isometry-invariance error analysis}
\begin{table}[t]
\centering
\begin{tabularx}{\linewidth}{l*{6}{X}}
\toprule
Resolution of the Euler curve     & 32 & 64 & 128 & 256 & 512& 1024 \\ \midrule
Mean of standard deviation         &  0.2362        &   0.1727   &  0.0206  & 0.0081      &    0.0076 &   0.0099    \\ \bottomrule
\end{tabularx}
\caption{Mean of standard deviations according to the resolution of the Euler curve.}
\label{table: res EC}
\end{table}
In this section, we evaluate the isometry invariance of our model.
To perform this evaluation, we first train a model with the same task as in the previous experiment in Figure \ref{fig: ANIM high res}, using only $5$ data points per class.
We then apply $10$ random transformations to each data and calculate the standard deviation of the distance from the average representation to each representation of the transformed data.
We repeat this process $8$ times and calculate the average value of the standard deviation to determine the ``isometry invariance error''.
Smaller values of the isometry-invariance error indicate that the model is more robust to isometry transformations.

Our first set of measurements focuses on the effect of increasing the resolution from $32$ to $1024$ while keeping the subdivision level fixed at $10$. 
We present the results of these experiments in Table \ref{table: res EC}.
An important point to note is that while the error decreases as the resolution increases, it starts to increase again beyond a certain point, instead of continuing to decrease.

In another experiment, we investigated the isometry invariance error by varying the subdivision level while keeping the resolution fixed at $512$.
We examine the average of standard deviations while increasing the subdivision level from $1$ to $13$.
The results are presented in Table \ref{table: res S2}, and it shows that the error consistently decreases as the subdivision level increases.

\begin{table}[t]
\centering
\begin{tabularx}{\linewidth}{l*{5}{X}}
\toprule
Subdivision level(\#points)    & 1(12) & 4(162) & 7(492) & 10(1002) & 13(1692) \\ \midrule
Mean of standard deviation         &  0.1062      & 0.0160   & 0.0078    &  0.0076   & 0.0076  \\ \bottomrule
\end{tabularx}
\caption{Mean of standard deviations according to the level of the subdivision of the icosahedron.}
\label{table: res S2}
\end{table}

\section{Discussion}\label{sec: discussion}
In this paper, we have explored a novel architecture using the Euler curve transform and $\on{O}(3)$-equivariant operator to obtain an isometry-invariant and subdivision-invariant representation of an embedded simplicial complex.
However, there are limitations to our methodology that need to be addressed.
First, obtaining an Euler curve for each direction can be time-consuming and requires a significant amount of memory, particularly as the resolution and the number of points drawn from $S^2$ increase.
Nonetheless, this process is only required once during the data preprocessing step.

Second, in this paper we are assuming that any subdivision of an embedded simplicial complex preserves the shape of the original complex, resulting in subdivision invariance.
However, in some cases, particularly when dealing with mesh data, subdivisions can change the shape of the complex, leading to a smoothed-out mesh.
In these cases, the characteristic function of the complex changes, and the guarantee of subdivision invariance cannot be maintained.

Third, the quality of the mesh data can significantly impact the topological information, such as the Euler characteristic, and outliers can distort the results.
That is, if the mesh data contains small holes or anomalies, even if they may be difficult to spot with the naked eye, they can have a considerable impact on the topological information obtained from the data. 

As a future direction, incorporating information on persistent homology may help address issues with outliers and small holes. 
Since the Euler curve can be calculated from persistent homology, the persistence barcode or persistence landscape provides more information than the Euler curve.
Also, since it has bottleneck stability \citep{oudot2017persistence} and stability in the direction of the unit vector (Proposition \ref{prop: Wasserstein stable}), it is a more comprehensive tool for handling issues such as small holes and outliers. 
To vectorize persistent homology information, we suggest considering using persistence landscape \citep{bubenik2015statistical} or persistence image \citep{adams2017persistence}.

However, there is an issue with entries that have infinite persistence in the persistence diagram.
Entries with infinite persistence in their case correspond to those whose points are at $\R\times \infty$.
This poses a challenge for using these diagrams as input to deep learning architectures, as incorporating points at infinity requires a new discovery or invention.
The persistence landscape also faces a similar issue due to the potentially infinite values of the landscape function.
Therefore, considering Proposition \ref{prop: inv and EQUIV PL version}, we must develop functionals capable of using all the information in $\lambda_v^K$, which may contain infinite values.
To address this, we proposed a transformation of the diagram in Section \ref{subsec: PHT} to make both the persistence diagram and the persistence landscape easier to handle.
This seems to solve the problem of infinite values since there is no infinite persistence, but it does come with a downside: the transformation eliminates translation invariance.
In our method, we eliminated translation dependency through 1D convolution and pooling layers.
However, when using the transformed landscape or diagram, translation dependency cannot be eliminated in the same way.
This is because the transformed diagrams no longer have the same underlying structure as the original diagrams.

In that respect, we consider using zigzag persistent homology as a potential area of future work.
we believe that there are several benefits to using this approach.
One major advantage is that it allows for the elimination of entries with infinite persistence, while still preserving all of the information.
This is particularly important, as persistent homology transforms currently face an obstacle when dealing with entries that have infinite persistence.
Another key benefit of using zigzag persistent homology is that the zigzag persistent homology data obtained from the filtration induced by the unit vector $v$ and the one induced by $-v$ have exactly the same information \citep{carlsson2009zigzag}.
This offers new opportunities for research and analysis.
The use of zigzag persistent homology could be particularly promising, as it offers a means of addressing the issue of infinite persistence without sacrificing other important properties.

In conclusion, we want to emphasize the potential of our proposed methodology to incorporate weighted Euler curve transforms, as shown in the recent work of \citet{jiang2020weighted}.
This extension will enable us to handle objects with weights associated with each simplex, such as features, which can enhance the versatility of our approach.
By incorporating this extension, we expect to obtain more precise representations of the underlying geometric structures of data and perform better in tasks such as classification and clustering.

\acks{
Taejin Paik was supported by National Research Foundation of Korea (NRF) Grant 2022R1A5A6000840 funded by the Korean Government and also supported by National Science Foundation of Korea Grant funded by the Korean Government (MSIP) [RS-2022-00165404].
The GPUs used in this study were bought with the support of National Research Foundation (NRF) Grant 2019R1A2C4070302 funded by the Korean Government.
}

\clearpage
\appendix
\section{Proofs}\label{appensix: sec: proof}
\subsection{Proof of Theorem \ref{thm: finite field persistent}}
\begin{proof}
    We write $\mcal{C}_i(K; G)$ and $d_i^G$ for the set of $i$-chains and the boundary map respectively with $G$-coefficients.
    Writing out the definitions, and using the dimension theorem gives us
    \begin{align*}
    \sum_{i=0}^n (-1)^i \operatorname{rk}_G H_i(K;G) &= \sum_{i=0}^n (-1)^i (\operatorname{rk}_G \ker d_i^G - \operatorname{rk}_G d_{i+1}^G) \\
    &= \sum_{i=0}^n (-1)^i (\operatorname{rk}_G \ker d_i^G + \operatorname{rk}_G d_i^G)\\
    &= \sum_{i=0}^n (-1)^i \operatorname{rk}_G C_i(K;G) =  \sum_{i=0}^n (-1)^i \operatorname{rk}_G (C_i(K;\mbb{Z}) \otimes G) \\
    &=\sum_{i=0}^n (-1)^i \operatorname{rk}_\mbb{Z} C_i(K;\mbb{Z}) =\sum_{i=0}^n (-1)^i \operatorname{rk}_\mbb{Z} H_i(K;\mbb{Z})\\
    &= \chi(K).
    \end{align*}
\end{proof}


\subsection{Proof of Proposition \ref{prop: ECT equiv}}
\begin{proof}
    From the equations
    \begin{align*}
    (RK + w)_{v, r} &= \{ x\in \mbb{R}^n \mid x \in RK + w \text{ and } x\cdot v \leq r \}\\
    &= \{ x \in \mbb{R}^n \mid R(x-w) \in K \text{ and } R(x-w)\cdot Rv \leq r- w\cdot v \}\\
    &= \{ R^{-1}x + w \in \mbb{R}^n \mid x \in K \text{ and } x\cdot Rv \leq r- w\cdot v \}\\
    &= R(K_{Rv, r-w\cdot v})+ w,
    \end{align*}
    we have
    \begin{align*}
    \mcal{F}_{RK+w}(v, r) &= \chi((RK+w)_{v, r})\\
    &= \chi( R(K_{Rv, r-w\cdot v})+  w )\\
    &=\chi(K_{Rv, r-w\cdot v})\\
    &= \mcal{F}_K(Rv, r- w\cdot v).
    \end{align*}
\end{proof}

\subsection{Proof of Proposition \ref{prop: Wasserstein stable}}
\begin{proof}
    We mainly follow the proof of Lemma 2.1 in \cite{turner2014persistent}.
    Let us look at two height functions, $h_{v_1}$ and $h_{v_2}$ in the direction $v_1$ and $v_2$ respectively defined on $K$.
    In other words, for any $x$ in $K$, $h_{v_i}(x)$ equals $x$ dot $v_i$ where $i\in \{1, 2\}$.
    If we set $L:= \sup\{\|x\|_2 \mid x\in K\}$, then we have
    $$
    \left|h_{v_1}(x)-h_{v_2}(x)\right|=\left|x \cdot v_1-x \cdot v_2\right| \leq\|x\|_2\left\|v_1-v_2\right\|_2 \leq L\left\|v_1-v_2\right\|_2.
    $$
    From the bottleneck stability theorem \citep{oudot2017persistence,chazal2016structure}, we have 
    $$
    d_b(B_K^{v_1},B_K^{v_2}) \leq \| h_{v_1} - h_{v_2}\|_\infty
    $$
    where $d_b$ is the bottleneck distance between two persistence diagrams.
    Combining these two inequalities, we get
    $$
    d_b(B_K^{v_1},B_K^{v_2}) \leq \| h_{v_1} - h_{v_2}\|_\infty \leq L\|v_1 - v_2\|_2.
    $$
\end{proof}

\subsection{Proof of Proposition \ref{prop: diagram o3 and trans}}
\begin{proof}
    First, we prove $\mcal{G}_{RK}(v) = \mcal{G}_{K}(Rv)$.
    Note that $\mcal{G}_{RK}(v)$ is the persistence diagram induced from the filtration $((RK)_{v, r})_{r\in\R} = (r\mapsto ((RK)_{v, r}))$.
    Since we have
    \begin{align*}
       (RK)_{v, r} &= \{x\in \R^n\mid x\in RK \text{ and } x\cdot v \leq r\}\\
       &= \{x\in \R^n \mid Rx \in K \text{ and } Rx\cdot Rv\leq r\}\\
       &= \{R^{-1}x\in \R^n \mid x \in K \text{ and } x\cdot Rv\leq r\} = R(K_{Rv, r}),
    \end{align*}
    we only need to prove that the persistence diagram induced from the filtration $(R(K_{Rv, r}))_{r\in \R}$ is the same as $\mcal{G}_K(Rv)$.
    Note that the filtrations $(R(K_{Rv, r}))_{r\in \R}$ and $(K_{Rv, r})_{r\in \R}$ induce the same persistence diagram.
    Since $\mcal{G}_K(Rv)$ is the persistence diagram induced from the filtration $(K_{Rv, r})_{r\in \R}$, the claim holds.

    Similarly, $\mcal{G}_{K+w}(v)$ is the persistence diagram induced from the filtration $((K+w)_{v, r})_{r\in \R}$.
    Since we have
    \begin{align*}
       (K+w)_{v, r} &= \{x\in \R^n\mid x\in K + w \text{ and } x\cdot v \leq r\}\\
       &= \{x\in \R^n \mid x - w \in K \text{ and } (x-w)\cdot v\leq r - v\cdot w\}\\
       &= \{x + w\in \R^n \mid x \in K \text{ and } x\cdot v\leq r - v\cdot w\} = w + K_{v, r - v\cdot w},
    \end{align*}
    and the filtrations $(w + K_{v, r - v\cdot w})_{r\in \R}$ and $(K_{v, r - v\cdot w})_{r\in \R}$ induce the same persistence diagram, the only difference between $\mcal{G}_{K+w}(v)$ and $\mcal{G}_K(V)$ is translation difference by $v\cdot w$.
\end{proof}

\subsection{Proof of Proposition \ref{prop: landscape o3 and trans}}
\begin{proof}
    We first show the $\on{O}(n)$-equivariance.
    As in the proof of Proposition \ref{prop: diagram o3 and trans}, we have 
    $(RK)_{v, r} = R(K_{Rv, r})$, and the filtrations $(R(K_{Rv, r}))_{r\in \R}$ and $(K_{Rv, r})_{r\in \R}$ induce the same persistence landscape.
    Since $\lambda_{v, c}^{R K}$ is the persistence landscape induced from $((RK)_{v, r})_{r\in \R}$ and $\lambda_{R v, c}^K$ is the persistence landscape induced from $(K_{Rv, r})_{r\in \R}$, the claim holds.

    For the second claim, we note that from the proof of Proposition \ref{prop: diagram o3 and trans}, we have 
    $$(K+w)_{v, r} = w + K_{v, r-v\cdot w}.$$
    From the equations
    \begin{align*}
        \beta_{K+w, v}^{t-m, t+m} :&= \dim(\on{Im}((K+w)_{v, t-m}\rightarrow(K+w)_{v, t+m}))\\
        &=\dim(\on{Im}(w + K_{v, t-m - v\cdot w}\rightarrow w + K_{v, t+m - v\cdot w}))\\
        &=\dim(\on{Im}(K_{v, t-m - v\cdot w}\rightarrow K_{v, t+m - v\cdot w}))\\
        &=\beta_{K, v}^{(t-v\cdot w) - m, (t- v\cdot w) + m},
    \end{align*}
    we have 
    \begin{align*}
        \lambda_{v, c}^{K+w}(t) &=\sup\{m\geq 0 \mid \beta_{K+w, v}^{t-m, t+m}\geq c\}\\
        &=\sup\{m\geq 0 \mid \beta_{K, v}^{(t-v\cdot w)-m, (t - v\cdot w)+m}\geq c\} = \lambda_{v, c}^K(t - v\cdot w).
    \end{align*}
\end{proof}

\subsection{Proof of Theorem \ref{thm: GNN equiv}}
We define an operator $f\diamond g:S^2\rr \R^\ell$:
$$
(f\diamond g)(x) = \int_{S^2} g(x -y) f(y) \; dV(y)
$$
where $dV$ is the volume form of $S^2$ satisfying $\int_{S^2}dV = 1$.
Then, the operator $\diamond$ is $\on{O}(3)$-equivariant:
\begin{lemma}\label{lem: integral equiv}
    $(Rf)\diamond g = R(f\diamond g)$ for every $R\in\on{O}(3)$.
\end{lemma}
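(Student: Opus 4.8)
The plan is to compute $(Rf)\diamond g$ directly from the definition and massage it into $R(f\diamond g)$ by a change of variables on the sphere. First I would write out
\[
((Rf)\diamond g)(x) = \int_{S^2} g(x-y)\,(Rf)(y)\;dV(y) = \int_{S^2} g(x-y)\,f(Ry)\;dV(y),
\]
using the definition $Rf(y)=f(Ry)$ introduced earlier. The natural move is the substitution $z = Ry$, so $y = R^{-1}z = R^{\top}z$. Since $R\in\on{O}(3)$ acts on $S^2$ as a diffeomorphism preserving the normalized volume form ($\int_{S^2}dV=1$ and $R$ is an isometry of $S^2$), we have $dV(y) = dV(z)$ and the domain of integration remains all of $S^2$. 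This turns the integral into
\[
\int_{S^2} g(x - R^{\top}z)\,f(z)\;dV(z).
\]

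Next I would handle the argument of $g$. Here I need $g(x - R^{\top}z) = g(R^{\top}(Rx - z))$, which follows from linearity of $R^{\top}$, and then use the fact that $g$ is \emph{radial} — recall $g$ was defined by $g(w) = g_\theta(\|w\|)$ — so that $g(R^{\top}u) = g_\theta(\|R^{\top}u\|) = g_\theta(\|u\|) = g(u)$ for any $u$, since orthogonal matrices preserve the Euclidean norm. Applying this with $u = Rx - z$ gives $g(x - R^{\top}z) = g(Rx - z)$, so the integral becomes
\[
\int_{S^2} g(Rx - z)\,f(z)\;dV(z) = (f\diamond g)(Rx) = R(f\diamond g)(x),
\]
which is exactly the claim. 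The computation should be presented cleanly as a short chain of equalities with the substitution and the radial-symmetry step flagged explicitly.

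The main obstacle — really the only subtle point — is justifying that the orthogonal change of variables leaves the measure invariant, i.e. that $R_\ast(dV) = dV$ for $R\in\on{O}(3)$ acting on $S^2$; this is standard (the round metric on $S^2$ is $\on{O}(3)$-invariant, hence so is its Riemannian volume form, and the normalization $\int_{S^2}dV=1$ is preserved), but it is worth stating rather than passing over silently. The other ingredient, radiality of $g$, is immediate from its definition but must be invoked, since the statement of the lemma would be false for a general $g$. Beyond these two observations the argument is a routine change of variables, so I would keep the write-up to a few lines.
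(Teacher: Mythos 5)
Your proof is correct and takes essentially the same route as the paper's: the change of variables $y\mapsto R^{-1}y$ using the $\on{O}(3)$-invariance of the normalized volume form (the paper cites $|\det(R)|=1$), followed by moving $R$ across the argument of $g$ via norm preservation (the paper writes this as $\|x-R^{-1}y\|=\|Rx-y\|$, which is exactly the radiality of $g$ you invoke). Your version is slightly more explicit in flagging that radiality of $g$ is indispensable, but the argument is the same.
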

\begin{proof}
    If $R$ is an element of $\on{O}(3)$, then 
    \begin{align*}
        ((Rf)\diamond g)(x) &= \int_{S^2} g(x - y)(Rf)(y)\; dV(y) =\int_{S^2} g(x - y)f(Ry)\; dV(y) \\
        &=\int_{S^2} g(x - R^{-1}y)f(y)\; dV(y) =\int_{S^2} g(Rx - y)f(y)\; dV(y)\\
        &= (f\diamond g)(Rx) = (R(f\diamond g))(x) 
    \end{align*}
    since $|\det(R)| = 1$ and $\|x - R^{-1}y\| =\|Rx - y\|$ for every $x, y \in S^2$.
\end{proof}

Now, we provide a proof of Theorem \ref{thm: GNN equiv} as follows:

\begin{proof}
    First we note that $\mbb{P}\left[
        \| R\mcal{T}(f)^\prime(x) - \mcal{T}(Rf)^\prime(x)\|_\infty > \epsilon
    \right]$ 
    is less than or equal to
    $$
    \mbb{P}\left[
        \| R\mcal{T}(f)^\prime(x) - (R(f\diamond g))(x)\|_\infty > \epsilon/2
    \right] + 
    \mbb{P}\left[
        \| ((Rf)\diamond g)(x) - \mcal{T}(Rf)^\prime(x)\|_\infty > \epsilon/2
    \right]
    $$
    by Lemma \ref{lem: integral equiv} the triangular inequality.
    Therefore, it is sufficient to show that 
    $$
    \mbb{P}\left[
        \| (f\diamond g)(x) - \mcal{T}(f)^\prime(x)\|_\infty > \epsilon/2
    \right]\lrr 0
    $$
    since 
    $\mbb{P}\left[ \| R\mcal{T}(f)^\prime(x) - (R(f\diamond g))(x)\|_\infty > \epsilon/2 \right] = \mbb{P}\left[ \| \mcal{T}(f)^\prime(Rx) - (f\diamond g)(Rx)\|_\infty > \epsilon/2 \right]$
    and 
    $ \mbb{P}\left[ \| ((Rf)\diamond g)(x) - \mcal{T}(Rf)^\prime(x)\|_\infty > \epsilon/2 \right] $ goes to $0$ in the same way.
    Also, without loss of generality, we can assume $\ell = 1$, so that it is sufficient to show that
    $$
    \mbb{P}\left[
        | (f\diamond g)(x) - \mcal{T}(f)^\prime(x) | > \epsilon
    \right]\lrr 0.
    $$

    Since $\mbb{E}_y\left[ g(x - y)f(y) \right] = \int_{S^2} g(x-y)f(y)\;dV(y) = (f\diamond g)(x)$, by Hoeffding's inequality \citep{boucheron2013concentration}, the claim holds.
\end{proof}

\subsection{Proof of Proposition \ref{prop: inv and equiv}}
\begin{proof}
    Since we convert the embedded simplicial complex to the Euler curves, a subdivision of the embedded simplicial complex cannot affect the outcome.
    Assume that $R\in\on{O}(n)$ and $w\in \mbb{R}^n$. 
    Then, we have 
    \begin{align*}
        \mcal{DF}_{RK + w}(v) &= \{\mcal{D}_1\circ\mcal{F}_{RK+w}(v), \dots, \mcal{D}_m\circ\mcal{F}_{RK+w}(v)\}\\
        &=\{\mcal{D}_1\circ\mcal{F}_{RK}(v), \dots, \mcal{D}_m\circ\mcal{F}_{RK}(v)\}
    \end{align*}
    since $\mcal{F}_{RK+w}(v)(x) = \mcal{F}_{RK}(v)(x - v\cdot w)$ for every $x\in \mbb{R}$ from Proposition \ref{prop: ECT equiv}. 
    Furthermore, from the same proposition, since we have $\mcal{F}_{RK}(v) = \mcal{F}_{K}(Rv)$,
    \begin{align*}
        \mcal{DF}_{RK + w}(v) &= \{\mcal{D}_1\circ\mcal{F}_{K}(Rv), \dots, \mcal{D}_m\circ\mcal{F}_{K}(Rv)\}\\
        &=\mcal{DF}_{K}(Rv).
    \end{align*}
\end{proof}
    
\subsection{Proof of Proposition \ref{prop: inv and EQUIV PL version}}
\begin{proof}
    Essentially, the proof is the same as the proof of Proposition \ref{prop: inv and equiv}.
    For the same reason as in the proof of \ref{prop: inv and equiv}, $\mcal{E}\lambda_*$ is subdivision-invariant.
    
    From Proposition \ref{prop: landscape o3 and trans},
    \begin{align*}
        \mcal{E}\lambda_{RK}(v) &=\left\{\mathcal{E}_1 \circ \lambda_v^{RK}, \ldots, \mathcal{E}_m \circ \lambda_v^{RK}\right\} =\left\{\mathcal{E}_1 \circ \lambda_{Rv}^K, \ldots, \mathcal{E}_m \circ \lambda_{Rv}^K\right\} \\
        &=\mcal{E}\lambda_K(Rv) = (R(\mcal{E}\lambda_K))(v),
    \end{align*}
    and 
    $$
        \mcal{E}\lambda_{K + w}(v)=\left\{\mathcal{E}_1 \circ \lambda_v^{K + w}, \ldots, \mathcal{E}_m \circ \lambda_v^{K + w}\right\} =\left\{\mathcal{E}_1 \circ \lambda_{v}^K, \ldots, \mathcal{E}_m \circ \lambda_{v}^K\right\} =\mcal{E}\lambda_K(v)
    $$
    for $R\in \on{O}(n)$ and $w\in \R^n$ since 
    $$
    \lambda_v^{K+w}(x, c) = \lambda_{v, c}^{K+w}(x) = \lambda_{v, c}^K(t - v\cdot w).
    $$
\end{proof}

\vskip 0.2in
\bibliography{main}

\end{document}